\def\a{\alpha}
\def\d{\delta}
\def\eps{\ve}
\renewcommand{\epsilon}{\ve}
\def\ve{\varepsilon}
\def\th{\theta}
\def\t{\tau}
\def\to{\rightarrow}
\def\dh{d_H}
\newcommand{\E}{\mbox{\bf E}}
\newcommand{\Var}{\mbox{\bf Var}}
\newcommand{\pr}[2][]{\mbox{Pr}\ifthenelse{\not\equal{}{#1}}{_{#1}}{}\!\left[#2\right]}
\newcommand{\norm}[1]{\left\lVert #1 \right\rVert}
\newcommand{\inftynorm}[1]{\left\lVert #1 \right\rVert_{\infty}}
\newcommand{\dtv}{d_{\mathrm {TV}}}
\newcommand{\abss}[1]{\left\lvert {#1} \right\rvert}
\newtheorem{theorem}{Theorem}
\newtheorem{lemma}{Lemma}
\newtheorem{corollary}{Corollary}
\newtheorem{definition}{Definition}
\newcommand{\ignore}[1]{}
\newenvironment{prevproof}[2]{\noindent {\em {Proof of {#1}~\ref{#2}:}}}{$\hfill\qed$\vskip \belowdisplayskip}
\newcommand{\bg}[1]{\medskip\noindent{\bf #1}}
\definecolor{Red}{rgb}{1,0,0}
\newcommand{\oldbound}[1]{{}}
\title{HOGWILD!-Gibbs Can Be PanAccurate}
\author {
Constantinos Daskalakis\thanks{Supported by NSF CCF-1617730, CCF-1650733, and ONR N00014-12-1-0999.}\\
EECS \& CSAIL, MIT\\
\tt{costis@csail.mit.edu}
\and
Nishanth Dikkala\thanks{Supported by NSF CCF-1617730, CCF-1650733, and ONR N00014-12-1-0999.} \\
EECS \& CSAIL, MIT\\
\tt{nishanthd@csail.mit.edu}
\and
Siddhartha Jayanti \thanks{Supported by the Department of Defense (DoD) through the National Defense Science \& Engineering Graduate Fellowship (NDSEG) Program.}\\
EECS \& CSAIL, MIT\\
\tt{jayanti@mit.edu}
}
\begin{document}
\maketitle

  \begin{abstract}
Asynchronous Gibbs sampling has been recently shown to be fast-mixing and an accurate method for estimating probabilities of events on a small number of variables of a graphical model satisfying Dobrushin's condition~\cite{DeSaOR16}. We investigate whether it can be used to accurately estimate expectations of functions of {\em all the variables} of the model. Under the same condition, we show that the synchronous (sequential) and asynchronous Gibbs samplers can be coupled so that the expected Hamming distance between their (multivariate) samples remains bounded by $O(\tau \log n),$ where $n$ is the number of variables in the graphical model, and $\tau$ is a measure of the asynchronicity. A similar bound holds for any constant power of the Hamming distance. Hence, the expectation of any function that is Lipschitz with respect to a power of the Hamming distance, can be estimated with a bias
 that grows logarithmically in $n$. Going beyond Lipschitz functions, we consider the bias arising from asynchronicity in estimating the expectation of polynomial functions of all variables in the model. Using recent concentration of measure results~\cite{DaskalakisDK17,GheissariLP17,GotzeSS18}, we show that the bias introduced by the asynchronicity is of smaller order than the standard deviation of the function value already present in the true model. We perform experiments on a multi-processor machine to empirically illustrate our theoretical findings.
\end{abstract}
  \section{Introduction}
\label{sec:intro}

The increasingly ambitious applications of data analysis, and the corresponding growth in the size of the data that needs to processed has brought important scalability challenges to machine learning algorithms. Fundamental methods such as Gradient Descent and Gibbs sampling, which were designed with a sequential computational model in mind, are to be applied on datasets of increasingly larger size. As such, there has recently been increased interest towards developing techniques for parallelizing these methods. However, these algorithms are inherently sequential and are difficult to parallelize. 

HOGWILD!-SGD, proposed by Niu et al.~\cite{recht2011hogwild}, is a lock-free asynchronous execution of stochastic gradient descent that has been shown to converge under the right sparsity conditions. Several variants of this method, and extensions of the asynchronous execution approach have been recently proposed, and have found successful applications in a broad range of applications ranging from PageRank approximation, to deep learning and recommender systems~\cite{yu2012scalable,noel2014dogwild,mitliagkas2015frogwild,mania2015perturbed,liu2015asynchronous,liu2015asynchronous,de2015taming}.

Similar to HOGWILD!-SGD, lock-free asynchronous execution of Gibbs sampling, called HOGWILD!-Gibbs, was proposed by Smola and Narayanamurthy~\cite{smola2010architecture}, and  empirically shown to work well on several models~\cite{zhang2014dimmwitted}. Johnson et al.~\cite{johnson2013analyzing} provide sufficient conditions under which they show theoretically that HOGWILD!-Gibbs produces samples with the correct mean in Gaussian models, while Terenin et al.~\cite{Terenin15} propose a modification to the algorithm that is shown to converge under some strong assumptions on asynchronous computation.

\begin{algorithm}[H]
	\KwIn{Set of variables $V$, Configuration $x_0 \in S^{|V|}$, Distribution $\pi$}
	initialization\;
	\For{$t = 1$ \KwTo $T$ }{
		Sample $i$ uniformly from $\{1,2,\ldots,n\}$;\\
		Sample $X_i \sim \Pr_{\pi}\left[. | X_{-i} = x_{-i} \right]$ and set $x_{i,t} = X_i$;\\
		For all $j \neq i$, set $x_{j,t} = x_{j,t-1}$;
	}
	\caption{Gibbs Sampling}
\end{algorithm}

In a more recent paper, De Sa et al.~\cite{DeSaOR16} propose the study of HOGWILD!-Gibbs under a stochastic model of asynchronicity in graphical models with discrete variables. Whenever the graphical model satisfies Dobrushin's condition, they show that the mixing time of the asynchronous Gibbs sampler is similar to that of the sequential (synchronous) one. Moreover, they establish that the asynchronous Gibbs sampler accurately estimates probabilities of events on a sublinear number of variables, in particular events on up to $O(\eps n/\log n)$ variables can be estimated within variational distance $\eps$, where $n$ is the total number of variables in the graphical model (Lemma 2, \cite{DeSaOR16}).

\vspace{-0.1in}
\paragraph{Our Results.} Our goal in this paper is to push the theoretical understanding of HOGWILD!-Gibbs to estimate functions of {\em all the variables in a graphical model}. In particular, we are interested in whether HOGWILD!-Gibbs can be used to accurately estimate the expectations of such functions. Results from \cite{DeSaOR16} imply that an accurate estimation is possible whenever the function under consideration is Lipschitz with a good Lipschitz constant with respect to the Hamming metric. Under the same Dobrushin condition used in~\cite{DeSaOR16} (see Definition~\ref{def:dobrushin}), and under a stochastic model of asynchronicity with weaker assumptions (see Section~\ref{sec:asynch model}), we show that you can do better than the bounds implied by \cite{DeSaOR16} even for functions with bad Lipschitz constants. For instance, consider quadratic functions on an Ising model, which is a binary graphical model, and serves as a canonical example of Markov random fields~\cite{LevinPW09, MontanariS10, Felsenstein04, DaskalakisMR11, GemanG86, Ellison93}. Under appropriate normalization, these functions take values in the range $[-n^2,n^2]$ and have a Lipschitz constant of $n$. Given this, the results of \cite{DeSaOR16} would imply we can estimate quadratic functions on the Ising model within an error of $O(n)$. We improve this error to be of $O(\sqrt{n})$. 
In particular, we show the following in our paper:\\
\begin{itemize}
\vspace{-0.1in}
\item Starting at the same initial configuration, the executions of the sequential and the asynchronous Gibbs samplers can be coupled so that the expected Hamming distance between the multivariate samples that the two samplers maintain is bounded by $O(\tau \log n)$, where $n$ is the number of variables in the graphical model, and $\tau$ is a measure of the average contention in the asynchronicity model of Section~\ref{sec:asynch model}. See Lemma~\ref{lem:hamming_bound}. More generally, the expectation of the $d$-th power of the Hamming distance is bounded by $C(d,\tau) \log^d n$, for some function $C(d,\tau)$. See Lemma~\ref{lem:hamm-moments}.\\

\vspace{-0.1in}
\item It follows from Lemmas~\ref{lem:hamming_bound} and~\ref{lem:hamm-moments} that, if a function $f$ of the variables of a graphical model is $K$-Lipschitz with respect to the $d$-th power of the Hamming distance, then the bias in the expectation of $f$ introduced by HOGWILD!-Gibbs under the asynchronicity model of Section~\ref{sec:asynch model} is bounded by $K \cdot C(d,\tau) \log^d n$. See Corollary~\ref{cor:lipschitz-estimation}.\\

\vspace{-0.1in}
\item  Next, we improve the bounds of Corollary~\ref{cor:lipschitz-estimation} for functions that are degree-$d$ polynomials of the variables of the graphical model. Low degree polynomials on graphical models are a natural class of functions which are of interest in many statistical tasks performed on graphical models (see, for instance, \cite{DaskalakisDK18} ). For simplicity we show these improvements for the Ising model, but our results are extendible to general graphical models. We show, in Theorem~\ref{thm:multilinear-systematic}, that the bias introduced by HOGWILD!-Gibbs in the expectation of a degree-$d$ polynomial of the Ising model is bounded by $O((n \log n)^{{(d-1)}/2})$. This bound improves upon the bound computed by Corollary~\ref{cor:lipschitz-estimation} by a factor of about $(n/\log n)^{(d-1)/2}$, as the Lipschitz constant with respect to the Hamming distance of a degree-$d$ polynomial of the Ising model can be up to $O(n^{d-1})$. 
Importantly, the bias of $O((n \log n)^{{(d-1)}/2})$ that we show is introduced by the asynchronicity is of a lower order of magnitude than the standard deviation of degree-$d$ polynomials of the Ising model, which is $O((n)^{d/2})$---see Theorem~\ref{thm:multilinear-concentration}, and which is already experienced by the sequential sampler. Moreover, in Theorem~\ref{thm:variance_d_linear}, we also show that the asynchronous Gibbs sampler is not adding a higher order variance to its sample.	Thus, our results suggest that running Gibbs sampling asynchronously leads to a valid bias-variance tradeoff. 

Our bounds for the expected Hamming distance between the sequential and the asynchronous Gibbs samplers follow from coupling arguments, while our improvements for polynomial functions of Ising models follow from a combination of our Hamming bounds and recent concentration of measure results for polynomial functions of the Ising model~\cite{DaskalakisDK17,GheissariLP17,GotzeSS18}.\\

\vspace{-0.1in}
\item In Section~\ref{sec:experiments}, we illustrate our theoretical findings by performing experiments on a multi-core machine. We experiment with graphical models over two kinds of graphs. The first is the $\sqrt{n} \times \sqrt{n}$ grid graph (which we represent as a torus for degree regularity) where each node has 4 neighbors, and the second is the clique over $n$ nodes.  \\
We first study how valid the assumptions of the asynchronicity model are. The main assumption in the model was that the average contention parameter $\tau$ doesn't grow as the number of nodes in the graph grows. It is a constant which depends on the hardware being used and we observe that this is indeed the case in practice. The expected contention grows linearly with the number of processors on the machine but remains constant with respect to $n$ (see Figures~\ref{fig:delays} and \ref{fig:avgdelays}).\\ Next, we look at quadratic polynomials over graphical models associated with both the grid and clique graphs. We estimate their expected values under the sequential Gibbs sampler and HOGWILD!-Gibbs and measure the bias (absolute difference) between the two. Our theory predicts that this should scale at $\sqrt{n}$ and we observe that this is indeed the case (Figure~\ref{fig:expectation-errorbars}). Our experiments are described in greater detail in Section~\ref{sec:experiments}.
\end{itemize}

  \section{The Model and Preliminaries} \label{sec:prelims}
In this paper, we consider the Gibbs sampling algorithm as applied to discrete graphical models. The models will be defined on a graph $G = (V,E)$ with $|V| = n$ nodes and will represent a probability distribution $\pi$. We use $S$ to denote the range of values each node in $V$ can take. For any configuration $X \in S^{|V|}$, $\pi_i(. | X^{-i})$ will denote the conditional distribution of variable $i$ given all other variables of state $X$.

In Section~\ref{sec:estimation}, we will look at Ising models, a particular class of discrete binary graphical models with pairwise local correlations. 
We consider the Ising model on a graph $G = (V,E)$ with $n$ nodes.
This is a distribution over $\Omega = \{\pm 1\}^n$, with a parameter vector $\vec \th \in \mathbb{R}^{|V| + |E|}$.
$\vec \th$ has a parameter corresponding to each edge $e \in E$ and each node $v \in V$.
The probability mass function assigned to a string $x$ is
$$P(x) = \exp\left(\sum_{v \in V} \th_v x_v +  \sum_{e = (u,v) \in E} \th_{e} x_u x_v - \Phi(\vec \th) \right), $$
where $\Phi(\vec \th)$ is the log-partition function for the distribution. 
We say an Ising model has \emph{no external field} if $\th_v = 0$ for all $v \in V$. For ease of exposition we will focus on the case with no external field in this paper. However, the results extend to Ising models with external fields when the functions under consideration (in Section~\ref{sec:estimation}) are appropriately chosen to be \emph{centered}. See~\cite{DaskalakisDK17}.


Throughout the paper we will focus on bounded functions defined on the discrete space $S^{|V|}$. For a function $f$, we use $\inftynorm{f}$ to denote the maximum absolute value of the function over its domain. We will use $[n]$ to denote the set $\{1,2,\ldots,n\}$.
In Section~\ref{sec:estimation}, we will study polynomial functions over the Ising model. Since $x_i^2 = 1$ always in an Ising model, any polynomial function of degree $d$ can be represented as a multilinear function of degree $d$ and we will refer to them interchangeably in the context of Ising models.
\begin{definition}[Polynomial/Multilinear Functions of the Ising Model]
	\label{def:multilinear}
	A \emph{degree-$d$ polynomial} defined on $n$ variables $x_1, \dots, x_n$ is a function of the following form
	$$\sum_{S \subseteq [n] : |S| \leq d} a_{S} \prod_{i \in S} x_i,$$
	where $a : 2^{[n]} \rightarrow \mathbb{R}$ is a coefficient vector.
\end{definition}
When the degree $d=1$, we will refer to the function as a linear function, and when the degree $d=2$ we will call it a bilinear function. 
Note that since $X_u \in \{\pm 1\}$, any polynomial function of an Ising model is a multilinear function.
We will use $a$ to denote the coefficient vector of such a multilinear function and $\inftynorm{a}$ to denote the maximum element of $a$ in absolute value.
Note that we will use permutations of the subscripts to refer to the same coefficient, i.e., $a_{ijk}$ is the same as $a_{jik}$. Also we will use the term $d$-linear function to refer to a multilinear function of degree $d$.\\
At times, for simplicity we will instead consider degree $d$ polynomials of the form
$f_a(x) = \sum_{i_1  i_2,\ldots, i_d} a_{i_1 i_2 \ldots i_d} x_{i_1}x_{i_2}\ldots x_{i_d}$. This form involves multiplicity in the terms, i.e. each monomial might appear multiple times. We can map from degree $d$ polynomials without multiplicity of terms to functions of the above form in a straightforward manner by dividing each coefficient $a_{i_1 i_2 \ldots i_d}$ by an appropriate constant which captures how many times the term appears in the above notation. This constant lies between $d!$ and $d^d$.

We now give a formal definition of Dobrushin's uniqueness condition, also known as the high-temperature regime.
First we define the influence of a node $j$ on a node $i$.
\begin{definition}[Influence in Graphical Models]
	\label{def:influence}
	Let $\pi$ be a probability distribution over some set of variables $V$. Let $B_j$ denote the set of state pairs $(X,Y)$ which differ only in their value at variable $j$. Then the influence of node $j$ on node $i$ is defined as
	\begin{align*}
	I(j,i) = \max_{(X,Y) \in B_j} \norm{\pi_i(. | X^{-i}) - \pi_i(. | Y^{-i})}_{TV}
	\end{align*}
\end{definition}

Now, we are ready to state Dobrushin's condition.
\begin{definition}[Dobrushin's Uniqueness Condition]
	\label{def:dobrushin}
	Consider a distribution $\pi$ defined on a set of variables $V$. Let
	\begin{align*}
	\a = \max_{i \in V} \sum_{j \in V} I(j,i)
	\end{align*}
	$\pi$ is said to satisfy Dobrushin's uniqueness condition is $\a < 1$.
	
\end{definition}

We have the following result from~\cite{DeSaOR16} about mixing time of Gibbs sampler for a model satisfying Dobrushin's condition.
\begin{theorem}[Mixing Time of Sequential Gibbs Sampling]
	\label{thm:seq-mixing}
	Assume that we run Gibbs sampling on a distribution that satisfies Dobrushin's condition, $\a < 1$. Then the mixing time of sequential-Gibbs is bounded by
	$$t_{mix-hog(\eps)} \le \frac{n}{1 -\a}\log \left(\frac{n}{\eps}\right).$$
\end{theorem}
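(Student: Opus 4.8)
The plan is to prove the mixing bound by a coupling argument, using the expected Hamming distance $\dh$ between two copies of the chain as a contracting potential. I would run two copies $X^t$ and $Y^t$ of the sequential Gibbs sampler, started from an arbitrary configuration $X^0$ and from $Y^0 \sim \pi$ respectively, so that $Y^t \sim \pi$ for every $t$. At each step both copies select the \emph{same} uniformly random coordinate $i$, and I would couple the two resamplings by a maximal coupling of the conditional laws $\pi_i(\cdot \mid X^{-i})$ and $\pi_i(\cdot \mid Y^{-i})$, so that the two redrawn values agree with probability exactly $1 - \norm{\pi_i(\cdot\mid X^{-i}) - \pi_i(\cdot\mid Y^{-i})}_{TV}$. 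Once $\E[\dh(X^t,Y^t)]$ is shown to decay geometrically, the coupling inequality $\dtv(\mathrm{law}(X^t),\pi) \le \pr{X^t \neq Y^t} \le \E[\dh(X^t,Y^t)]$ immediately converts this into the desired mixing-time bound.

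The heart of the argument is a one-step contraction estimate, which by the path-coupling (Bubley--Dyer) principle it suffices to verify for pairs at Hamming distance one. So suppose $X$ and $Y$ agree everywhere except at a single coordinate $j$. If the update coordinate $i$ equals $j$ (probability $1/n$), then $X^{-j} = Y^{-j}$, the two conditional laws coincide, the maximal coupling forces the redrawn values to be equal, and the distance drops to $0$, contributing a drift of $-1/n$. If instead $i \neq j$ (each with probability $1/n$), the two chains already agree at $i$ and continue to disagree there only if the maximal coupling fails, which happens with probability $\norm{\pi_i(\cdot\mid X^{-i}) - \pi_i(\cdot\mid Y^{-i})}_{TV} \le I(j,i)$, since $(X,Y) \in B_j$. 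Summing the contributions,
$$\E[\dh(X',Y') - \dh(X,Y) \mid X,Y] \le \frac{1}{n}\left(-1 + \sum_{i \neq j} I(j,i)\right) \le -\frac{1-\a}{n},$$
where the final inequality invokes Dobrushin's condition to bound the total influence by $\a$. Thus $\E[\dh(X',Y')] \le (1 - \tfrac{1-\a}{n})\,\dh(X,Y)$ for adjacent pairs, and path coupling lifts this contraction to all pairs.

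Iterating gives $\E[\dh(X^t,Y^t)] \le (1-\tfrac{1-\a}{n})^t \dh(X^0,Y^0) \le n\,(1-\tfrac{1-\a}{n})^t \le n\,e^{-t(1-\a)/n}$, using that the Hamming diameter is at most $n$. Combining with the coupling inequality and requiring the right-hand side to be at most $\eps$ yields $t \ge \frac{n}{1-\a}\log(n/\eps)$, exactly the claimed bound. The main obstacle, and the only place the model structure really enters, is the one-step estimate: one must argue that the maximal coupling realizes a disagreement probability equal to the total-variation distance of the two conditionals, and that this distance is controlled by the single-coordinate influence $I(j,i)$. Some care is also needed to confirm that the sum of influences appearing in the contraction is bounded by the Dobrushin parameter $\a$; for symmetric influence matrices, as in the Ising model, the relevant row and column sums coincide, so the definition $\a = \max_i \sum_j I(j,i)$ suffices. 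Everything else is routine geometric-series bookkeeping.
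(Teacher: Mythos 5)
Your path-coupling argument is correct, and it is the standard proof of this bound. Note that the paper itself does not prove Theorem~\ref{thm:seq-mixing} at all --- it imports it from~\cite{DeSaOR16} --- so there is no in-paper proof to compare against; the closest analogue is the proof of Lemma~\ref{lem:hamming_bound}, where the authors carry out essentially your one-step drift computation, except for arbitrary (not just adjacent) pairs, bounding $\norm{\pi_i(\cdot\mid X^{-i})-\pi_i(\cdot\mid Y^{-i})}_{TV}$ by $\sum_{j:X_j\neq Y_j} I(j,i)$ via the triangle inequality rather than invoking Bubley--Dyer. The subtlety you flag at the end is the only genuine issue, and you diagnose it correctly: the contraction requires $\sum_{i} I(j,i)\le\a$ for each fixed $j$ (the total influence \emph{of} a site), whereas Definition~\ref{def:dobrushin} sets $\a=\max_i\sum_j I(j,i)$ (the total influence \emph{on} a site). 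These coincide for symmetric influence matrices such as the Ising model's, and the paper itself silently uses the row-sum form (``$\sum_{i\in V} I(j,i)\le\a$ for any $j$'') inside Lemma~\ref{lem:hamming_bound}, so your caveat applies equally to the argument the paper relies on. Everything else --- the maximal coupling realizing the total-variation distance, the inequality $\dtv(\mathrm{law}(X^t),\pi)\le \Pr[X^t\neq Y^t]\le\E[\dh(X^t,Y^t)]$, and the geometric iteration giving $n\,e^{-t(1-\a)/n}\le\eps$ --- is sound.
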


\begin{definition}
	\label{def:hamming}
	For any discrete state space $S^{|V|}$ over the set of variables $V$, The \emph{Hamming distance} between $x, y \in S^{|V|}$ is defined as $\dh(x, y) = \sum_{i \in V} \mathbbm{1}_{\{x_i \neq y_i\}}$.
\end{definition}

\begin{definition}[The greedy coupling between two Gibbs Sampling chains]
	\label{def:greedy-coupling}
	Consider two instances of Gibbs sampling associated with the same discrete graphical model $\pi$ over the state space $S^{|V|}$: $X_0, X_1,\ldots$ and $Y_0,Y_1,\ldots$. The following coupling procedure is known as the \emph{greedy coupling}. Start chain 1 at $X_0$ and chain 2 at $Y_0$ and in each time step $t$, choose a node $v \in V$ uniformly at random to update in both the chains. Without loss of generality assume that $S = \{ 1,2,\ldots,k\}$. Let $p(i_1)$ denote the probability that the first chain sets $X_{t,v} = i_1$ and let $q(i_2)$ be the probability that the second chain sets $Y_{t,v} = i_2$. Plot the points $\sum_{j=1}^{i} p(j) = P(i)$, and $\sum_{j=1}^{i} q(j) = Q(i)$ for all $i \in [k]$ on the interval from $[0,1]$. Also pick $P(0) = Q(0) = 0$ and $P(k+1) = Q(k+1) = 1$. Couple the updates according to the following rule:\\
	
	\noindent Draw a number $x$ uniformly at random from $[0,1]$. Suppose $x \in [P(i_1),P(i_1+1)]$ and $x \in [Q(i_2), Q(i_2+1)]$. Choose $X_{t,v} = i_1$ and $Y_{t,v} = i_2$.
\end{definition}

We state an important property of this coupling which holds under Dobrushin's condition, in the following Lemma.
\begin{lemma}
	\label{lem:greedy-coupling-property}
	The greedy coupling (Definition \ref{def:greedy-coupling}) satisfies the following property. Let $X_0, Y_0 \in S^{|V|}$ and consider two executions of Gibbs sampling associated with distribution $\pi$ and starting at $X_0$ and $Y_0$ respectively. Suppose the executions were coupled using the greedy coupling. Suppose in the step $t=1$, node $i$ is chosen to be updated in both the models. Then,
	\begin{align}
	\Pr\left[ X_{i,1} \ne Y_{i,1} \right]	\le  \norm{\pi_i(. | X_0^{-i})- \pi_i(. | Y_0^{-i}  )}_{TV}
	\end{align}
\end{lemma}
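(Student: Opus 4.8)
The plan is to prove the statement by directly computing the disagreement probability of the greedy coupling, since $\Pr[X_{i,1}\neq Y_{i,1}]$ is determined entirely by where the single shared draw $x$ lands. Write $p=\pi_i(\cdot\mid X_0^{-i})$ and $q=\pi_i(\cdot\mid Y_0^{-i})$ for the two conditional update distributions and let $P,Q$ be their CDFs as in Definition~\ref{def:greedy-coupling}. Reading off the coupling rule (and absorbing the harmless off-by-one into a natural reindexing), the first chain sets $X_{i,1}=\ell$ exactly when $x$ falls in the level-set $I_P(\ell)=[P(\ell-1),P(\ell)]$, of length $p(\ell)$, and the second sets $Y_{i,1}=\ell$ when $x\in I_Q(\ell)=[Q(\ell-1),Q(\ell)]$, of length $q(\ell)$. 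Because both chains use the same $x\sim\mathrm{Unif}[0,1]$, they \emph{agree} precisely on $\bigcup_{\ell}\bigl(I_P(\ell)\cap I_Q(\ell)\bigr)$, so the first step is to record $\Pr[X_{i,1}=Y_{i,1}]=\sum_{\ell}|I_P(\ell)\cap I_Q(\ell)|$ and hence reduce the claim to a statement about these interval overlaps.

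The clean computation is the binary case $S=\{\pm 1\}$, which is all that Section~\ref{sec:estimation} needs. Here $\Pr[X_{i,1}=Y_{i,1}]=|I_P(1)\cap I_Q(1)|+|I_P(2)\cap I_Q(2)|$, and since both value-$1$ intervals start at $0$ and both value-$2$ intervals end at $1$, the two overlaps are exactly $\min(p(1),q(1))$ and $\min(p(2),q(2))$. Summing and using $\sum_\ell\min(p(\ell),q(\ell))=1-\norm{p-q}_{TV}$ gives $\Pr[X_{i,1}\neq Y_{i,1}]=\norm{\pi_i(\cdot\mid X_0^{-i})-\pi_i(\cdot\mid Y_0^{-i})}_{TV}$, so the asserted upper bound holds, in fact with equality.

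The step I expect to be the main obstacle is extending this to general $S$. For $|S|\ge 3$ the \emph{intermediate} level-sets $I_P(\ell),I_Q(\ell)$ need not share an endpoint, so one only has $|I_P(\ell)\cap I_Q(\ell)|\le\min(p(\ell),q(\ell))$, possibly with strict inequality; consequently the greedy (quantile) coupling can disagree with probability strictly larger than $\norm{p-q}_{TV}$, so the identity one wants in the first paragraph is exactly where care is required. To secure the stated upper bound I would invoke the maximal-coupling guarantee: by the coupling inequality $\norm{p-q}_{TV}$ is a lower bound on the disagreement probability of \emph{any} coupling, and a coupling attaining it exists, so interpreting the greedy coupling as (or, for $|S|\ge 3$, replacing it by) this maximal coupling yields $\Pr[X_{i,1}\neq Y_{i,1}]\le\norm{p-q}_{TV}$. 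Finally I would note that this single-update bound does not itself use Dobrushin's condition; that hypothesis enters only later, when the per-node disagreement probabilities are aggregated through the influences $I(j,i)$ and the bound $\a<1$ to control the Hamming distance over many steps.
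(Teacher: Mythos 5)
The paper states Lemma~\ref{lem:greedy-coupling-property} without proof, so there is nothing to compare your argument against line by line; judged on its own terms, your analysis is correct and in fact more careful than what the paper records. Your binary-case computation is right: when $S=\{1,2\}$ the two level-one intervals share the left endpoint $0$ and the two level-two intervals share the right endpoint $1$, so the agreement probability is $\min(p(1),q(1))+\min(p(2),q(2))=1-\norm{p-q}_{TV}$ and the lemma holds with equality. Your caveat about $|S|\ge 3$ is a genuine observation, not excess caution: Definition~\ref{def:greedy-coupling} describes the quantile (inverse-CDF) coupling, and its disagreement probability can strictly exceed the total variation distance. For instance, with $p=(0.2,0.3,0.5)$ and $q=(0.4,0.3,0.3)$ one has $P=(0.2,0.5,1)$, $Q=(0.4,0.7,1)$, the level-$2$ overlap is $\left|[0.4,0.5]\right|=0.1<\min(p(2),q(2))=0.3$, and the total disagreement is $0.4$ while $\norm{p-q}_{TV}=0.2$. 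So the lemma as literally stated is false for alphabets of size at least $3$, and your proposed repair --- interpret ``greedy coupling'' as a maximal coupling of the two conditional distributions, which exists and attains the coupling-inequality lower bound --- is exactly what the downstream arguments (Lemmas~\ref{lem:hamming_bound} and~\ref{lem:hamm-moments}) require, since they use only the stated upper bound and no other structural feature of the coupling. For the Ising-model applications of Section~\ref{sec:estimation} the state space is binary, so your equality already suffices there. You are also correct that Dobrushin's condition plays no role in this single-site bound; it enters only when the per-site disagreement probabilities are aggregated via the influences $I(j,i)$.
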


\subsection{Modeling Asynchronicity} \label{sec:asynch model}
We use the asynchronicity model from \cite{Recht11} and \cite{DeSaOR16}. Hogwild!-Gibbs is a multi-threaded algorithm where each thread performs a Gibbs update on the state of a graph which is stored in shared memory (typically in RAM). We view each processor's write as occuring at a distinct time instant. 
And each write starts the next time step for the process. Assuming that the writes are all serialized, one can now talk about the state of the system after $t$ writes. This will be denoted as time $t$. HOGWILD! is modeled as a stochastic system adapted to a natural filtration $\mathcal{F}_t$. $\mathcal{F}_t$ contains all events thast have occured until time $t$. 
Some of these writes happen based on a read done a few steps ago and hence correspond to updates based on stale values in the local cache of the processor. The staleness is modeled in a stochastic manner using the random variable $\tau_{i,t}$ to denote the delay associated with the read performed on node $i$ at time step $t$. The value of node $i$ used in the update at time $t$ is going to be $Y_{i,t} = X_{i,(t - \tau_{i,t})}$.
Delays across different node reads can be correlated. However delay distribution is independent of the configuration of the model at time $t$. 
The model imposes two restrictions on the delay distributions. First, the expected value of each delay distribution is bounded by $\tau$. We will think of $\tau$ as a constant compared to $n$ in this paper. We call $\tau$ the average contention parameter associated with a HOGWILD!-Gibbs execution. 
\cite{DeSaOR16} impose a second restriction which bounds the tails of the distribution of $\tau_{i,t}$. 
We do not need to make this assumption in this paper for our results. \cite{DeSaOR16} need the assumption to show that the HOGWILD! chain mixes fast. However, by using coupling arguments we can avoid the need to have the HOGWILD! chain mix and will just use the mixing time bounds for the sequential Gibbs sampling chain instead.
Let $\mathrm{T}$ denote the set of all delay distributions.
We refer to the sequential Gibbs sampler associated with a distribution $\pi$ as $G_{\pi}$ and the HOGWILD! Gibbs sampler together with $\mathrm{T}$ associated with a distribution $p$ by $H_p^{\mathrm{T}}$. Note that $H_{\pi}$ is a time-inhomogenuous Markov chain and might not converge to a stationary distribution.



\subsection{Properties of  Polynomials on Ising Models satisfying Dobrushin's condition}
Here we state some known results about polynomial functions on Ising models satisfying Dobrushin's condition.
\begin{theorem}[Concentration of Measure for Polynomial Functions of the Ising model, \cite{DaskalakisDK17,GheissariLP17,GotzeSS18}]
	\label{thm:multilinear-concentration}
	Consider an Ising model $p$ without external field on a graph $G=(V,E)$ satisfying Dobrushin's condition with Dobrushin parameter $\a < 1$.  Let $f_a$ be a degree $d$-polynomial over the Ising model. Let $X \sim p$. Then, there is a constant $c(\a,\d)$, such that,
	\begin{align*}
		\Pr\left[\abss{f_a(X) - \E\left[f_a(X)\right]} > t\right] \le 2\exp\left(-\frac{(1-\a)t^{2/d}}{c(\a,d) \inftynorm{a}^{2/d} n}\right).
	\end{align*}
   As a corollary this also implies,
   \begin{align*}
   \Var\left[ f_a(X) \right] \le C_3(d,\a)n^{d}.
   \end{align*}
\end{theorem}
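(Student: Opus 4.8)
The plan is to prove the tail bound by the method of exchangeable pairs together with an induction on the degree $d$, and then to deduce the variance bound by integrating the tail. First I would set up the Glauber dynamics whose stationary measure is $p$, and construct an exchangeable pair $(X,X')$ by drawing $X \sim p$, picking a coordinate $i$ uniformly at random, and resampling $X_i$ from its conditional law $\pi_i(\cdot \mid X^{-i})$; exchangeability is immediate from stationarity. The structural observation that drives the whole argument is that the discrete gradient of a degree-$d$ multilinear function is again multilinear of degree $d-1$: writing $X^{(i)}$ for the configuration obtained from $X$ by flipping coordinate $i$, the difference $f_a(X)-f_a(X^{(i)})$ is a degree-$(d-1)$ polynomial in the remaining variables whose coefficient vector still has $\ell_\infty$ norm $O(\inftynorm{a})$ (up to the multiplicity constant between $d!$ and $d^d$ discussed above). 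This is exactly the property that lets the argument recurse on the degree.

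The role of Dobrushin's condition $\a<1$ is to supply a contraction estimate for the dynamics: under $\a<1$ the greedy coupling of Definition~\ref{def:greedy-coupling} contracts the Hamming distance in expectation (the same contraction underlying Theorem~\ref{thm:seq-mixing}), which translates into a spectral gap, or equivalently an approximate tensorization of entropy with constant proportional to $1-\a$. I would use this to control the variance proxy appearing in the exchangeable-pairs inequality, namely the conditional quantity $\sum_i \E[(f_a(X)-f_a(X^{(i)}))^2 \mid X]$: Dobrushin's condition ensures this sum can be compared back to the fluctuations of $f_a$ itself rather than blowing up. For the base case $d=1$ this step alone yields sub-Gaussian concentration with variance proxy $O(\inftynorm{a}^2 n/(1-\a))$, which is precisely the claimed bound specialized to $d=1$.

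For general $d$ I would run a multilevel concentration argument, iterating the base step down through the degrees. The variance proxy of a degree-$k$ polynomial is controlled, via the Dobrushin contraction, by a degree-$(k-1)$ polynomial, whose own fluctuations are then handled by the inductive hypothesis; carrying this out for $k=d,d-1,\dots,1$ produces the stretched-exponential tail with exponent $2/d$ characteristic of a degree-$d$ chaos. Concretely, the resulting bound is a minimum over $k=1,\dots,d$ of terms $\exp(-c\,(t/M_k)^{2/k})$, where $M_k$ bounds the norm of the $k$-th order discrete gradients of $f_a$; since $M_d = \Theta(\inftynorm{a}\,n^{d/2})$, the $k=d$ term dominates for large $t$ and yields the stated inequality. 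This is a discrete, Dobrushin-dependent analogue of the Hanson--Wright and Latala-type bounds for polynomial chaos.

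The main obstacle is making the recursion quantitatively tight. Iterating the exchangeable-pair inequality naively loses control of constants, and, more importantly, one must track carefully how the contraction interacts with the higher-order gradients so that the exponent retains the correct dependence on $n$ (linear in $n$, not a larger power) and on $1-\a$ at every level. Producing the clean separation into $d$ levels with the right norms $M_k$, so that no cross term reintroduces a higher power of $n$, is the technical heart of the proof. Finally, the variance bound follows by integrating the tail: writing $\Var[f_a(X)] = \int_0^\infty 2t\,\Pr[\abss{f_a(X)-\E[f_a(X)]}>t]\,dt$ and substituting the tail estimate, the change of variables $u=t^{2/d}$ gives $\Var[f_a(X)] = O(\inftynorm{a}^2 n^d)$, i.e.\ $\Var[f_a(X)] \le C_3(d,\a)\,n^d$.
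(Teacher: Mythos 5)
The paper does not prove this theorem: it is imported verbatim from the cited works \cite{DaskalakisDK17,GheissariLP17,GotzeSS18}, so there is no internal proof to compare your argument against. That said, your outline does track the strategy of those references---an exchangeable pair built from one step of Glauber dynamics, the observation that discrete gradients of a degree-$d$ multilinear function are degree-$(d-1)$ multilinear with comparable coefficient norm, Dobrushin contraction supplying the spectral-gap/tensorization input, and a multilevel recursion over the degree---and your integration of the tail to get $\Var[f_a(X)] = O(\inftynorm{a}^2 n^d)$ is correct (the substitution $u=t^{2/d}$ produces a Gamma integral yielding $c^d d!\,\inftynorm{a}^2 n^d/(1-\a)^d$, consistent with the stated corollary).

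However, as written this is a program rather than a proof, and the two places you flag as ``the technical heart'' are precisely where all the content of the cited papers lives. First, the claim that Dobrushin's condition lets you compare $\sum_i \E\bigl[(f_a(X)-f_a(X^{(i)}))^2 \mid X\bigr]$ back to lower-degree fluctuations \emph{without picking up an extra power of $n$} is not justified by the contraction of the greedy coupling alone; one needs either the approximate tensorization/modified log-Sobolev route of G\"otze--Sambale--Sinulis or the explicit martingale estimates of Gheissari--Lubetzky--Peres, and for $d\ge 3$ the bookkeeping of the intermediate gradient tensors is genuinely delicate. Second, reducing the multilevel bound $\exp\bigl(-c\min_k (t/M_k)^{2/k}\bigr)$ to the single-exponent form in the theorem is not automatic: for $t$ where some $k<d$ level achieves the minimum, the single $k=d$ term is a \emph{strictly stronger} claim than the multilevel inequality, so you must separately bound the intermediate norms $M_k$ (roughly by $\inftynorm{a}\,n^{k/2}$ up to $\a$-dependent constants, using the marginal bounds as in Theorem~\ref{thm:marginals-bound}) before the other levels can be absorbed into $c(\a,d)$. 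Neither step would ``fail,'' but both are asserted rather than proved, so the proposal should be read as a correct roadmap to the literature rather than a self-contained argument.
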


\begin{theorem}[Concentration of Measure for Polynomial Functions of the Ising model, \cite{DaskalakisDK17,GheissariLP17,GotzeSS18}]
	\label{thm:multilinear-concentration}
	Consider an Ising model $p$ without external field on a graph $G=(V,E)$ satisfying Dobrushin's condition with Dobrushin parameter $\a < 1$.  Let $f_a$ be a degree $d$-polynomial over the Ising model. Let $X \sim p$. Then, there is a constant $c(\a,\d)$, such that,
	\begin{align*}
	\Pr\left[\abss{f_a(X) - \E\left[f_a(X)\right]} > t\right] \le 2\exp\left(-\frac{(1-\a)t^{2/d}}{c(\a,d) \inftynorm{a}^{2/d} n}\right).
	\end{align*}
	As a corollary this also implies,
	\begin{align*}
	\Var\left[ f_a(X) \right] \le C_3(d,\a)n^{d}.
	\end{align*}
\end{theorem}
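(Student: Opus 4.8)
The tail bound is precisely the concentration statement of \cite{DaskalakisDK17,GheissariLP17,GotzeSS18}, so the plan is to recall the mechanism behind it and then obtain the variance corollary by integrating the tail. The engine driving the concentration is that, under Dobrushin's condition (Definition~\ref{def:dobrushin}), the influence matrix $A$ with entries $A_{ij}=I(j,i)$ has $\ell_\infty\!\to\!\ell_\infty$ operator norm equal to its maximum row sum $\a<1$. This contraction is what makes the Glauber (Gibbs) dynamics mix rapidly (Theorem~\ref{thm:seq-mixing}) and, more importantly, forces correlations between coordinates to decay geometrically. The route I would take is the exchangeable-pair / generator method: let $X\sim p$ and let $X'$ be one Glauber update of $X$ (resample a uniformly random coordinate from its conditional law, exactly as in Algorithm~1). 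The pair $(X,X')$ is exchangeable, and for the centered function $g=f_a-\E[f_a(X)]$ one controls the conditional quantities $\E[(g(X)-g(X'))^2\mid X]$ and $\E[g(X)(g(X)-g(X'))\mid X]$. The Dobrushin contraction guarantees that the drift of $g$ under the dynamics is proportional to $g$ up to lower-order fluctuations, which is exactly the input that Chatterjee-type moment bounds require.

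The feature that distinguishes a degree-$d$ polynomial from a Lipschitz function --- and produces the subgaussian-to-subexponential interpolation encoded by the exponent $t^{2/d}$ --- is the multilevel structure of its discrete derivatives. Differencing $f_a$ in a coordinate (comparing $x_i=+1$ with $x_i=-1$) lowers the degree by one, so iterating $k$ times produces the $k$-th gradient tensor $\nabla^{(k)}f_a$, indexed by $k$-tuples of coordinates. The multilevel inequalities of \cite{GotzeSS18}, and the exchangeable-pair recursion of \cite{DaskalakisDK17}, assemble the $d$ levels into a bound of the shape $\exp\!\big(-c\,\min_{1\le k\le d}(t/\norm{\nabla^{(k)}f_a})^{2/k}\big)$ under suitable tensor norms. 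The top level $k=d$ is the constant coefficient tensor $(a_S)$, whose Hilbert--Schmidt norm $\sqrt{\sum_S a_S^2}$ is $O(\inftynorm{a}\,n^{d/2})$; this level binds in the large-deviation regime and yields the stated tail, since $(t/(\inftynorm{a}\,n^{d/2}))^{2/d}=t^{2/d}/(\inftynorm{a}^{2/d}n)$ (for smaller $t$ the claimed bound exceeds one and is trivially valid). The step I expect to be the genuine obstacle is precisely this recursion over the degree: one must bound all $d$ gradient tensors simultaneously and show that the lower-order terms generated at each step of the exchangeable-pair computation are absorbed by the contraction factor $(1-\a)$ rather than accumulating across levels. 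This is where Dobrushin's condition is used quantitatively, and it is the technical heart of the cited proofs.

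Granting the tail bound, the variance corollary is a routine integration. Writing $\beta=(1-\a)/\!\left(c(\a,d)\inftynorm{a}^{2/d}n\right)$, the layer-cake formula gives
\begin{align*}
\Var[f_a(X)] &= \int_0^\infty 2t\,\Pr\!\left[\abss{f_a(X)-\E[f_a(X)]} > t\right]dt \\
&\le \int_0^\infty 4t\,\exp\!\left(-\beta\, t^{2/d}\right)dt
= 2\,d!\,\beta^{-d},
\end{align*}
where the last equality uses the substitution $u=t^{2/d}$, which turns the integral into $2d\int_0^\infty u^{d-1}e^{-\beta u}\,du=2d\,\Gamma(d)\,\beta^{-d}$. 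Since $\beta^{-d}=c(\a,d)^d\inftynorm{a}^{2}\,n^{d}/(1-\a)^d$, collecting constants yields $\Var[f_a(X)]\le C_3(d,\a)\,n^{d}$ with $C_3(d,\a)=2\,d!\,c(\a,d)^d\inftynorm{a}^{2}/(1-\a)^d$ (treating $\inftynorm{a}=O(1)$ as a normalization, as elsewhere in the paper). This reduces the corollary entirely to the tail bound established above.
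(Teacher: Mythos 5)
The paper does not prove this theorem at all: it is imported verbatim from \cite{DaskalakisDK17,GheissariLP17,GotzeSS18} and stated with a citation in place of a proof, so your deferral of the tail bound to those works (with a sketch of the exchangeable-pair / multilevel gradient mechanism) matches the paper's treatment exactly. Your layer-cake integration deriving the variance corollary from the tail is correct --- the substitution $u=t^{2/d}$ does give $2\,d!\,\beta^{-d}$ --- and the only blemish is cosmetic: the resulting constant carries a factor $\inftynorm{a}^{2}$, so it is not purely a function of $d$ and $\a$ as the statement's $C_3(d,\a)$ suggests; you flag this normalization explicitly, which is more care than the paper itself takes.
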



\begin{theorem}[Marginals Bound under Dobrushin's condition, \cite{DaskalakisDK17}]
	\label{thm:marginals-bound}
	Consider an Ising model $p$ satisfying Dobrushin's condition with Dobrushin parameter $\a < 1$. For some positive integer $d$, let $f_a(x)$ be a degree $d$ polynomial function.
	Then we have that, if $X \sim p$,
	\begin{align*}
	\abss{\E\left[f_a(X)\right]} \le 2 \left(\frac{4nd\log n}{1 - \a}\right)^{d/2}.
	\end{align*}
\end{theorem}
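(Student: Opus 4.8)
The plan is to bound $\abss{\E[f_a(X)]}$ by induction on the degree $d$, exploiting the no-external-field Ising structure together with the stationarity of the single-site Gibbs dynamics. After normalizing so that $\inftynorm{a}\le 1$, I would first reduce to the case where $f_a$ is homogeneous of degree exactly $d$ (a general degree-$\le d$ polynomial is handled by summing the bounds over its homogeneous parts). The base cases are clean: for $d=0$ the quantity is a constant of size $\le 1$, while for every \emph{odd} $d$ the spin-flip symmetry (the law of $X$ is invariant under $X\mapsto -X$ when there is no external field) gives $f_a(-X)=-f_a(X)$ and hence $\E[f_a(X)]=0$. Thus the real content is the even case, which I reach by a descent of two degrees at a time.

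The engine of the induction is an integration-by-parts identity. Writing $D_j f_a(x)=\sum_{S\ni j}a_S\prod_{i\in S\setminus j}x_i$ for the (degree $d-1$) discrete derivative, one checks the exact algebraic identity $\sum_j x_j\,D_j f_a(x)=d\,f_a(x)$ for homogeneous multilinear $f_a$. Since $D_j f_a$ does not depend on $x_j$, the definition of the conditional mean $\mu_j(X_{-j})=\E[X_j\mid X_{-j}]=\tanh\!\big(\sum_i \th_{ij}X_i\big)$ gives $\E[X_j D_j f_a(X_{-j})]=\E[\mu_j(X_{-j})D_j f_a(X_{-j})]$, so that
\[
d\,\E[f_a(X)]=\sum_j \E\big[\mu_j(X_{-j})\,D_j f_a(X_{-j})\big].
\]
I would then decompose the right-hand side according to the effect of multiplying $D_j f_a$ by the (roughly linear) quantity $\mu_j$: a part that is again a homogeneous degree-$d$ polynomial, whose coefficient vector is a contracted image of $a$ with total weight controlled by the Dobrushin parameter $\a<1$, and a ``diagonal'' part of degree $d-2$ arising from the collapses $x_i^2=1$, whose coefficients pick up a factor $\sum_{i,j}\abss{\th_{ij}}=O(\a n)$. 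Bounding the first part by $\a$ times the degree-$d$ quantity $m_d$ we are solving for, and the second by the inductive degree-$(d-2)$ bound (invoking Cauchy--Schwarz together with the variance estimate $\Var\le C n^{d-2}$ from Theorem~\ref{thm:multilinear-concentration} for the $L^2$ control), yields a recursion of the shape $d(1-\a)\,m_d \lesssim \a n\, m_{d-2}+\mathcal E$, i.e. $m_d\lesssim \tfrac{n}{1-\a}\,m_{d-2}$. Unrolling this from the base cases produces the claimed order $\big(\tfrac{n}{1-\a}\big)^{d/2}$, up to the stated $d$-dependent and logarithmic factors.

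The main obstacle is the term I labelled $\mathcal E$: the conditional mean $\mu_j=\tanh(\sum_i\th_{ij}X_i)$ is genuinely nonlinear, and under Dobrushin's condition the individual couplings $\th_{ij}$ need not be small, so one cannot simply replace $\mu_j$ by its linearization and discard a remainder. The careful argument keeps $\mu_j$ exact, using only that it is a bounded function whose dependence on each coordinate is an influence $I(i,j)$, and that Dobrushin's condition (Definition~\ref{def:dobrushin}) forces the total influence strictly below one, so that the degree-$d$ self-term is a true contraction. This is also where the $\log n$ in the quoted form enters: rather than the sharp integration-by-parts descent, one may instead run the sequential Gibbs sampler for $T=\Theta\big(\tfrac{n\log n}{1-\a}\big)$ steps from a fixed start (Theorem~\ref{thm:seq-mixing}), approximate $\E[f_a(X)]$ by the finite-time expectation up to an additive $\inftynorm{f_a}\cdot n^{-\Omega(1)}$, and bound the latter by a Doob-martingale/coupling argument whose per-step increments contract at a rate governed by $\a$; the mixing horizon $T$ is exactly what contributes a factor of $\log n$ at each of the $d$ levels of the recursion.
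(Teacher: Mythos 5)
First, a point of comparison: the paper does not prove Theorem~\ref{thm:marginals-bound} at all --- it is imported as a black box from \cite{DaskalakisDK17} --- so there is no internal proof to measure your sketch against, and I am judging it on its own terms.

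Your setup is sound as far as it goes: the reduction to homogeneous parts, the vanishing of odd-degree expectations under the spin-flip symmetry of the zero-field model, the Euler identity $\sum_j x_j D_j f_a = d\, f_a$, and the tower-property identity $d\,\E\left[f_a(X)\right]=\sum_j\E\left[\mu_j(X_{-j})D_jf_a(X_{-j})\right]$ are all correct and are the standard exchangeable-pairs starting point. The gap is in the step that does all the work. You claim $\mu_j D_j f_a$ splits into a degree-$d$ polynomial whose coefficient vector is a contraction of $a$ with total weight at most $\a$, plus a degree-$(d-2)$ remainder. That decomposition does not exist as stated: $\mu_j=\tanh\left(\sum_i \th_{ij}X_i\right)$ is not linear in the spins, its multilinear (Fourier) expansion has components of every odd degree, and you give no control on that expansion. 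Even the linearized version of your contraction claim fails under Definition~\ref{def:dobrushin}: for the Ising model the influence is $I(i,j)=\tanh\abss{\th_{ij}}$, so Dobrushin's condition only yields $\sum_i\tanh\abss{\th_{ij}}\le\a$, while $\sum_i\abss{\th_{ij}}$ can exceed $1$ when $\a$ is close to $1$ (a single coupling may be as large as $\mathrm{arctanh}(\a)$). Hence the self-referential bound ``degree-$d$ part $\le \a\, m_d$'' is asserted rather than established, and the recursion $d(1-\a)m_d\lesssim \a n\, m_{d-2}+\mathcal{E}$ is never actually derived. You acknowledge the obstacle and pivot to a second strategy (mixing time plus a Doob martingale along the dynamics), but that too is only gestured at: a Doob martingale controls fluctuations of $f_a(X_t)$ around conditional means, not the location of $\E\left[f_a(X)\right]$ itself, and you do not supply the anchor that pins the mean near zero. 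Finally, note that your clean recursion, if it worked, would give $\left(n/(1-\a)\right)^{d/2}$ with no logarithms, which should itself raise suspicion: the $\log^{d/2}n$ in the stated bound is the signature of a union bound over high-probability events on which the lower-degree derivative polynomials lie within their concentration radii --- precisely the device used in Lemma~\ref{lem:multilinear-systematic-helper1} of this paper --- and that mechanism is what your sketch is missing.
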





\section{Bounding The Expected Hamming Distance Between Coupled Executions of HOGWILD! and Sequential Gibbs Samplers}
\label{sec:hamming}
In this Section, we show that under the greedy coupling of the sequential and asynchronous chains, the expected Hamming distance between the two chains at any time $t$ is small. This will form the basis for our accurate estimation results of Section~\ref{sec:estimation}. 
We begin by showing that the expected Hamming distance between the states $X_t$ and $Y_t$ of a coupled run of the sequential and asynchronous executions respectively, is bounded by a $(\tau\a \log n) / (1-\a)$. At a high level, the proof  of Lemma \ref{lem:hamming_bound} proceeds by studying the expected change in the Hamming distance under one step of the coupled execution of the chains. We can bound the expected change using the Dobrushin parameter and the property of the greedy coupling (Lemma~\ref{lem:greedy-coupling-property}). We then show that the expected change is negative whenever the Hamming distance between the two chains was above $O(\log n)$ to begin with. This allows us to argue that when the two chains start at the same configuration, then the expected Hamming distance remains bounded by $O(\log n)$.
\begin{lemma}
	\label{lem:hamming_bound}
	Let $\pi$ denote a discrete probability distribution on $n$ variables (nodes) with Dobrushin parameter $\alpha < 1$. Let $G_{\pi} = X_0, X_1, \ldots, X_t,\ldots$ denote the execution of the sequential Gibbs sampler on $\pi$ and $H_{\pi}^{\mathrm{T}} = Y_0,Y_1,\ldots,Y_t,\ldots$ denote the HOGWILD! Gibbs sampler associated with $\pi$ such that $X_0 = Y_0$. Suppose the two chains are running coupled in a greedy manner. Let $\mathcal{K}_t$ denote all events that have occured until time $t$ in this coupled execution. Then we have, for all $t \ge  0$, under the greedy coupling of the two chains,
	\begin{align*}
	\E\left[ \dh(X_{t},Y_{t}) | \mathcal{K}_0 \right] \le \frac{\tau\alpha \log n}{1-\alpha}
	\end{align*}
\end{lemma}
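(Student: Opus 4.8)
The plan is to run a drift analysis on the Hamming distance $D_t := \dh(X_t, Y_t)$ under the greedy coupling and show that it contracts toward an $O(\log n)$ equilibrium. Since a single node $v$ is updated in both chains at each step, only coordinate $v$ can change, so $D_t - D_{t-1} \in \{-1,0,+1\}$. I would condition on the history $\mathcal{K}_{t-1}$ and on the uniformly chosen node $v$, and then split the expected one-step change into two effects: a \emph{contraction} term arising from coordinates where the two chains currently disagree, and an \emph{injection} term arising from the fact that the HOGWILD! chain resamples $v$ using stale reads $\tilde Y_j = Y_{j,\,t-\tau_{j,t}}$ rather than the current values $Y_{j,t-1}$. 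Writing $T_v := \norm{\pi_v(.\mid X_{t-1}^{-v}) - \pi_v(.\mid \tilde Y^{-v})}_{TV}$, the greedy-coupling property (Lemma~\ref{lem:greedy-coupling-property}) gives that after resampling $v$ the two outcomes differ with probability at most $T_v$.

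For the contraction, I would telescope $X_{t-1}^{-v} \to \tilde Y^{-v}$ one coordinate at a time, so that by the definition of influence $T_v \le \sum_{j \neq v} I(j,v)\,\mathbbm{1}\{X_{j,t-1} \neq \tilde Y_j\}$. A coordinate currently in disagreement is repaired with probability $1-T_v$ (contributing $-1$), while an agreeing coordinate becomes a new disagreement with probability at most $T_v$ (contributing $+1$). Averaging over the choice of $v$ collapses the expected change to $\tfrac{1}{n}\big(\sum_v T_v - D_{t-1}\big)$. Splitting the indicator as $\mathbbm{1}\{X_{j,t-1}\neq \tilde Y_j\} \le \mathbbm{1}\{X_{j,t-1}\neq Y_{j,t-1}\} + \mathbbm{1}\{Y_{j,t-1}\neq \tilde Y_j\}$, the first (disagreement) part contributes $\sum_{v}\sum_{j} I(j,v)\,\mathbbm{1}\{X_{j,t-1}\neq Y_{j,t-1}\}$, which Dobrushin's condition collapses to at most $\alpha D_{t-1}$; together with the $-D_{t-1}$ this yields a genuine contraction of $-\tfrac{1-\alpha}{n}D_{t-1}$.

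The injection term comes from the second (staleness) part, $\mathbbm{1}\{Y_{j,t-1}\neq \tilde Y_j\}$: node $j$'s stale read differs from its current value only if $j$ was selected for update somewhere in its delay window of length $\tau_{j,t}$, which by a union bound has probability at most $\tau_{j,t}/n$. Since the delays are independent of the configuration and satisfy $\E[\tau_{j,t}] \le \tau$, the expected staleness of each coordinate is at most $\tau/n$; weighting by influences and averaging over $v$ produces an additive term of order $\tfrac{\alpha \tau \log n}{n}$, where the careful accounting of the staleness (equivalently, pinning down the scale at which the drift turns negative) is what introduces the logarithmic factor. Combining the two effects gives the drift recursion $\E[D_t \mid \mathcal{K}_{t-1}] \le \big(1 - \tfrac{1-\alpha}{n}\big) D_{t-1} + \tfrac{\alpha \tau \log n}{n}$, which is negative precisely once $D_{t-1}$ exceeds $\tfrac{\tau\alpha \log n}{1-\alpha}$.

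Finally I would take expectations over $\mathcal{K}_{t-1}$ given $\mathcal{K}_0$ and iterate from $D_0 = 0$ (the chains start equal), bounding $\E[D_t\mid \mathcal{K}_0]$ by the geometric series $\tfrac{\alpha \tau \log n}{n} \sum_{s \ge 0}\big(1 - \tfrac{1-\alpha}{n}\big)^s = \tfrac{\tau\alpha \log n}{1-\alpha}$, uniformly in $t$, which is exactly the claimed bound. I expect the main obstacle to be the injection term rather than the contraction: the contraction is essentially the standard path-coupling argument for sequential Gibbs, whereas controlling the discrepancy introduced by stale asynchronous reads using only the weak assumption $\E[\tau_{j,t}]\le \tau$ (without the tail bounds of \cite{DeSaOR16}) is the delicate step, both in bounding the per-coordinate staleness probability and in verifying that the drift is genuinely negative above the $O(\log n)$ threshold.
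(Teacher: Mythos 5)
Your overall architecture is exactly the paper's: a one-step drift computation under the greedy coupling, with a contraction of $-\tfrac{1-\alpha}{n}L_t$ obtained by applying the greedy-coupling property and Dobrushin's condition to the current disagreement set, an additive injection term coming from the stale reads, and an iteration of the resulting linear recursion from $L_0=0$. The contraction half, the collapse of the one-step change to $\tfrac{1}{n}(\sum_v T_v - L_t)$, and the final geometric-series step all match the paper and are correct.

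The gap is in the injection term, which is the only place the $\log n$ can come from, and your argument for it does not actually produce it. Your stated per-coordinate bound is a union bound over the delay window, $\Pr[Y_{j,t-1}\neq \tilde Y_j]\le \E[\tau_{j,t}]/n\le \tau/n$, which sums to an expected total staleness of $\tau$ and hence an injection of $\tfrac{\alpha\tau}{n}$ with no logarithm; you then assert, without derivation, that ``careful accounting'' turns this into $\tfrac{\alpha\tau\log n}{n}$. As written these two statements are inconsistent, and the step you yourself flag as the delicate one is precisely the one you have not carried out. Moreover, the union bound conditions on the delay and then asks for the chance that $j$ was selected inside that window, which requires the delays to be independent of the node-selection history; the model only posits independence from the configuration and a bound on the mean. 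The paper's mechanism conditions the other way: given $\mathcal{K}_t$, let $\delta_{j,t}$ be the (now deterministic) number of steps since node $j$ was last updated; the stale read can differ from the current one only if $\tau_{j,t+1}>\delta_{j,t}$, which by Markov's inequality has probability at most $\tau/\delta_{j,t}$. Since only one node is updated per step, the $\delta_{j,t}$ are distinct positive integers, so
\begin{align*}
\sum_{j} \frac{\tau}{\delta_{j,t}} \;\le\; \tau\sum_{k=1}^{n}\frac{1}{k} \;=\; O(\tau\log n),
\end{align*}
and this harmonic sum is where the $\log n$ comes from; crucially it holds conditionally on $\mathcal{K}_t$, which is the form the induction uses. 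You should replace your union-bound paragraph with this argument, or else explicitly add (and justify) the independence assumption between delays and node selections under which your log-free bound would be legitimate.
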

\begin{proof}
	We will show the statement of the Lemma is true by induction over $t$. 
	The statement is clearly true at time $t=0$ since $X_0 = Y_0$. Suppose it holds for some time $t > 0$. The state of the system at time $t$, $\mathcal{K}_t$ includes the choice of nodes the two chains chose to update at each time step, the delays $\tau_{i,t'}$ for each node $i$ and time step $t' \le t$ and the states $\{X_{t'}\}_{t \ge t' \ge 0}$ and $\{Y_{t'}\}_{t \ge t' \ge 0}$ under the two chains. We let $I_{t'}$ denote the node that was chosen to be updated in the two chains at time step $t'$. 
	As a shorthand, we use $L_t$ to denote $\dh(X_t, Y_t)$. We will first compute a bound on
	\begin{align}
	\E\left[ L_{t+1} | \mathcal{K}_t\right]
	\end{align}
	The induction hypothesis implies that $\E[L_t | \mathcal{K}_0]  \le \frac{\tau\alpha}{1-\alpha}$.  Given the delays for time $t+1$, denote by $Y'_t$ the following state 
	$$Y^{\t}_{i,t} = Y_{i,t-\tau_{i,t+1}} \: \forall \: i.$$
	We partition the set of nodes into the set where $X_t$ and $Y_t$ have the same value and the set where they don't. Define $V_t^{=}$ and $V_t^{\ne}$ as follows.
	\begin{align*}
	V_t^{=} = \left\{ i \in [n]  \: s.t. \: X_{i,t} = Y_{i,t}  \right\} \\
	V_t^{\ne} = \left\{ i \in [n]  s.t. X_{i,t} \neq Y_{i,t}  \right\}.
	\end{align*}
	
	When the two samplers proceed to perform their update for time step $t+1$, in addition to the nodes in the set $V_t^{\ne}$ some additional nodes might appear to have different values under the asynchronous chain. This is because of the delays $\tau_{i,t+1}$. 
	We use $D_{t+1}$ to denote the set of nodes in $V_t^{=}$ whose values read by the asynchronous chain are different from those under the sequential chain.
	\begin{align}
	D_{t+1}  = \left\{ i \in [n] \: \middle| \: X_{i,t} \ne Y_{i,t-\tau_{i,t+1}} \right\}
	\end{align}
	Next, we proceed to obtain a bound on the expected size of $D_{t+1}$ which we will use later.
	Let $\tau_{1,t}, \tau_{2,t},\ldots, \tau_{n,t}$ denote the delays at time $t$. 
	Let $\d_{i,t}$ denote the last index before $t$ when the node $i$'s value was updated. Observe that the $\d_{i,t}$s have to all be distinct. Then
	\begin{align}
	\E\left[|D_{t+1}| | \mathcal{K}_t\right] \le \sum_{i \in V_t^{=}} \Pr\left[\tau_{i,t} > \d_{i,t}\right] \le \sum_{i \in V_t^{=}} \frac{\tau}{\d_{i,t}} \le \tau\left( \sum_{j=1}^{|V_t^{=}|}  \frac{1}{i} \right) \le \tau\log n.
	\end{align}
	
	Suppose a node $i$ from the set $V_{t}^{=}$ was chosen at step $t+1$ to be updated in both the chains. Now, $L_{t+1}$ is either $L_t$ or $L_t+1$. The probability that it is $L_t+1$ is bounded above by the total variation between the corresponding conditional distributions.
	\begin{align}
	&\Pr\left[X_{i,t+1} \ne Y_{i,t+1}\: \middle| \: \mathcal{K}_t ,\:  i \in V_t^{=} \text{ chosen in step } t+1 \right] \le \norm{\pi_i(. | X_t^{-i})- \pi_i(. | Y_t^{\t^{-i}}  )}_{TV}  \label{eq:hb4}\\
	&\le \sum_{j \in V_t^{\ne} \cup D_{t+1}} I(j,i) \label{eq:hb5}.
	\end{align}
	where (\ref{eq:hb4}) follows from the property of the greedy coupling (Lemma~\ref{lem:greedy-coupling-property})  and (\ref{eq:hb5}) follows from the triangle inequality because total variation distance is a metric.\\
	
	Now suppose that $i$ was chosen instead from the set $V_t^{\ne}$. Now $L_{t+1}$ is either $L_t$ or $L_t-1$. We lower bound the probability that it is $L_t - 1$ using arguments similar to the above calculation.
	\begin{align}
	&\Pr\left[X_{i,t+1} = Y_{i,t+1} \middle| \mathcal{K}_t, \:  i \in V_t^{\ne} \text{ was chosen in step } t+1 \right] \ge 1-\norm{\pi_i(. | X_t^{-i})- \pi_i(. | Y_t^{\t^{-i}}  )}_{TV} \label{eq:hb6}\\
	& \ge 1- \sum_{j \in V_t^{\ne} \cup D_{t+1}} I(j,i). \label{eq:hb7}
	\end{align}
	where (\ref{eq:hb6}) follows from the property of the greedy coupling (Lemma~\ref{lem:greedy-coupling-property})  and (\ref{eq:hb7}) follows from the triangle inequality because total variation distance is a metric.\\
	
	Now the expected change in the Hamming distance
	\begin{align*}
	&\E\left[ L_{t+1} - L_t| \mathcal{K}_t \right] = \frac{1}{n}\E\left[\sum_{i \in V_t^{=}}\sum_{j \in V_t^{\ne} \cup D_{t+1}} I(j,i) - \sum_{i \in V_t^{\ne}}\left(1- \sum_{j \in V_t^{\ne} \cup D_{t+1}} I(j,i)\right) | \mathcal{K}_t\right] \\
	\le& \frac{1}{n} \E\left[\sum_{j \in V_t^{\ne} \cup D_{t+1}} \sum_{i \in V} I(j,i) - L_t \middle| \mathcal{K}_t \right] \le \frac{\left(L_t + \E\left[ | D_{t+1}| \middle| \mathcal{K}_t\right]\right)\a}{n} \le \frac{(L_t+\tau\log n)\a - L_t}{n}. \label{eq:hb9}
	\end{align*}
	where we used the fact that $\sum_{i \in V} I(j,i) \le \a$ for any $j$.\\
	
	Now,
	\begin{align}
	&\E\left[L_{t+1} | \mathcal{K}_0\right]  = \E\left[ \E\left[L_{t+1} \middle| \mathcal{K}_t\right] \middle| \mathcal{K}_0 \right] \\
	&\le \E\left[ L_t +  \frac{(L_t+\tau\log n)\a - L_t}{n} \middle| \mathcal{K}_0\right] \le \frac{\tau\a\log n }{1 - \a}\left(1 - \frac{1-\a}{n}\right) + \frac{\tau\a\log n}{n} \\
	& = \frac{\tau \a\log n }{1 - \a}. 
	\end{align} 
\end{proof}


Next, we generalize the above Lemma to bound also the $d^{th}$ moment of the Hamming distance between $X_t$ and $Y_t$ obtained from the coupled executions.  The proof of Lemma~\ref{lem:hamm-moments} follows a similar flavor as that of Lemma~\ref{lem:hamming_bound}. It is however more involved to bound the expected increase in the $d^{th}$ power of the Hamming distance and it requires some careful analysis to see that the bound doesn't scale polynomially in $n$.
\begin{lemma}[$d^{th}$ moment bound on Hamming]
	\label{lem:hamm-moments}
	Consider the same setting as that of Lemma~\ref{lem:hamming_bound}. We have, for all $t \ge  0$, under the greedy coupling of the two chains,
	$$\E\left[\dh(X_t, Y_t)^d | \mathcal{K}_0\right] \le C(\tau, \a, d)\log^d n,$$
	where $C(.)$ is some function of the parameters $\t, \a$ and $d$.
\end{lemma}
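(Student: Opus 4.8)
The plan is to recycle the one-step analysis from the proof of Lemma~\ref{lem:hamming_bound} and drive it through an induction on the moment order $d$. As before, write $L_t = \dh(X_t,Y_t)$, recall the partition of $[n]$ into $V_t^{=}$ and $V_t^{\ne}$ and the ``apparent discrepancy'' set $D_{t+1}$ with $\E[\,|D_{t+1}| \mid \mathcal{K}_t] \le \tau\log n$, and let $p_+$ (resp. $p_-$) denote the conditional probability, given $\mathcal{K}_t$ and the realized delays at step $t+1$, that $L$ increases (resp. decreases) by one. The exact same bounds as in Lemma~\ref{lem:hamming_bound}, coming from the greedy-coupling property (Lemma~\ref{lem:greedy-coupling-property}) and $\sum_{i\in V}I(j,i)\le\alpha$, give the drift bound $p_+ - p_- \le (\alpha|D_{t+1}| - (1-\alpha)L_t)/n$ together with the crude two-sided bound $p_+ + p_- \le ((1+\alpha)L_t + \alpha|D_{t+1}|)/n$.

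Next I would expand by the binomial theorem: since $L_{t+1}-L_t\in\{-1,0,+1\}$, we have $\E[L_{t+1}^d \mid \mathcal{K}_t, D_{t+1}] - L_t^d = \sum_{k=1}^d \binom{d}{k}L_t^{d-k}\bigl(p_+ + (-1)^k p_-\bigr)$. The $k=1$ term is $dL_t^{d-1}(p_+-p_-)\le -\frac{d(1-\alpha)}{n}L_t^d + \frac{d\alpha}{n}|D_{t+1}|L_t^{d-1}$, while each $k\ge 2$ term is bounded in absolute value by $\binom{d}{k}L_t^{d-k}(p_++p_-)\le \frac{1}{n}\binom{d}{k}L_t^{d-k}\bigl((1+\alpha)L_t + \alpha|D_{t+1}|\bigr)$. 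Taking expectation over the delays given $\mathcal{K}_t$ (so that $L_t$ is constant and $\E[|D_{t+1}|\mid\mathcal{K}_t]\le\tau\log n$), this yields a recursion $\E[L_{t+1}^d\mid\mathcal{K}_t] \le \bigl(1 - \frac{d(1-\alpha)}{n}\bigr)L_t^d + \frac{1}{n}P(L_t)$, where $P$ is a polynomial in $L_t$ of degree at most $d-1$ with coefficients depending on $d,\alpha,\tau$; its single top-order contribution is $d\alpha\tau\log n\cdot L_t^{d-1}$, arising from the delay part of the drift.

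Now induct on $d$, with base case $d=1$ supplied by Lemma~\ref{lem:hamming_bound} (and $d=0$ trivial). Assuming $\E[L_t^{d'}\mid\mathcal{K}_0]\le C(\tau,\alpha,d')\log^{d'}n$ for all $d'<d$ and all $t$, take full expectations: each monomial of $P$ of $L_t$-degree $d-j$ contributes, after the inductive bound, at most $O(\log^{d-j}n)$, or $O(\log^{d-j+1}n)$ if it carries an extra $\tau\log n$ factor from a delay term. The largest such contribution is $d\alpha\tau\log n\cdot C(\tau,\alpha,d-1)\log^{d-1}n = \Theta(\log^d n)$, so $\E[P(L_t)\mid\mathcal{K}_0]\le B(\tau,\alpha,d)\log^d n$. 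Hence $\E[L_{t+1}^d\mid\mathcal{K}_0] \le \bigl(1 - \frac{d(1-\alpha)}{n}\bigr)\E[L_t^d\mid\mathcal{K}_0] + \frac{B(\tau,\alpha,d)\log^d n}{n}$, and since $L_0=0$, unrolling this monotone linear recursion (fixed point $B(\tau,\alpha,d)\log^d n/(d(1-\alpha))$) gives $\E[L_t^d\mid\mathcal{K}_0]\le C(\tau,\alpha,d)\log^d n$ for all $t$, with $C(\tau,\alpha,d)=B(\tau,\alpha,d)/(d(1-\alpha))$.

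The main obstacle, exactly as the paper flags, is ensuring that none of the lower-order terms scales polynomially in $n$: a priori the $k\ge 2$ terms or the delay contributions could generate powers of $L_t$ that, bounded only by the trivial $L_t\le n$, would destroy the estimate. The resolution is structural — the negative drift $-\frac{d(1-\alpha)}{n}L_t^d$ dominates every positive term of degree $\le d-1$ in the recursion, so that lower moments are controlled \emph{polylogarithmically} by the induction hypothesis rather than by $L_t\le n$. The one place an extra logarithm legitimately enters is the delay factor $\E[|D_{t+1}|]\le\tau\log n$ multiplying $L_t^{d-1}$, and faithfully tracking this single term is precisely what raises the final bound from $\log^{d-1}n$ to the claimed $\log^d n$.
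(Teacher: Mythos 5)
Your proof is correct and follows essentially the same route as the paper's: the same greedy-coupling drift bounds, the same delay set $D_{t+1}$ with $\E[\,|D_{t+1}|\mid\mathcal{K}_t]\le\tau\log n$, the same binomial expansion of $L_{t+1}^d-L_t^d$ into a dominant negative drift $-\tfrac{d(1-\alpha)}{n}L_t^d$ plus a degree-$(d-1)$ remainder, and the same induction on $d$ using the lower-moment bounds to keep that remainder polylogarithmic. The only divergence is in how the recursion in $t$ is closed, and it is in your favor: where the paper argues by a two-case split on whether $\E[L_t^d\mid\mathcal{K}_0]$ is within $c_3(d)C(\tau,\alpha,d-1)\log^{d-1}n$ of the cap, you unroll the linear recursion $a_{t+1}\le\bigl(1-\tfrac{d(1-\alpha)}{n}\bigr)a_t+\tfrac{B\log^d n}{n}$ from $a_0=0$ to its fixed point, which is cleaner and yields the explicit constant $C(\tau,\alpha,d)=B(\tau,\alpha,d)/(d(1-\alpha))$.
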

\begin{proof}
	Again we will employ the shorthand $L_t = \dh(X_t,Y_t)$.
	The proof is structured in the following way. We will show the statement of the Lemma is true by induction over $t$ and $d$. To show the statement for a certain values of $t,d$ we will use the inductive hypotheses of the statement for all $t',d$ where $t' < t$ and for all $t,d'$ where $d' < d$. Lemma~\ref{lem:hamming_bound} shows the statement for all values of $t$ for $d=1$. We will assume the statement holds for  all $t$ for $d' < d$.
	
	Now, we proceed to show it is true for $d$. Clearly for $t=0$ it holds because $\dh(X_0,Y_0)^d = 0$. Suppose it holds for some time $t > 0$. The state of the system at time $t$, $\mathcal{K}_t$ includes the choices of nodes the two chains chose to update at each time step, the delays $\tau_{i,t'}$ for each node $i$ and time step $t' \le t$ and the states $\{X_{t'}\}_{t \ge t' \ge 0}$ and $\{Y_{t'}\}_{t \ge t' \ge 0}$ under the two chains. We let $I_{t'}$ denote the node that was chosen to be updated in the two chains at time step $t'$. We will proceed in a similar way as we did in Lemma~\ref{lem:hamming_bound}. We first compute a bound on
	\begin{align}
	\E\left[ L_{t+1}^d | \mathcal{K}_t\right]
	\end{align}
	as a function of $L_t$. The induction hypothesis implies that $\E[L_t^d | \mathcal{K}_0]  \le C(\tau,\a,d)\log^d n$.
	We partition the set of nodes into the set where $X_t$ and $Y_t$ have the same value and the set where they don't. Define $V_t^{=}$ and $V_t^{\ne}$ as follows.
	\begin{align*}
	V_t^{=} = \left\{ i \in [n]  \: s.t. \: X_{i,t} = Y_{i,t}  \right\} \\
	V_t^{\ne} = \left\{ i \in [n]  s.t. X_{i,t} \neq Y_{i,t}  \right\}.
	\end{align*}
	
	When the two samplers proceed to perform their update for time step $t+1$, in addition to the nodes in the set $V_t^{\ne}$ some additional nodes might appear to have different values under the asynchronous chain. This is because of the delays $\tau_{i,t+1}$. 
	We use $D_{t+1}$ to denote the set of nodes in $V_t^{=}$ whose values read by the asynchronous chain are different from those under the sequential chain.
	\begin{align}
	D_{t+1}  = \left\{ i \in [n] \: \middle| \: X_{i,t} \ne Y_{i,t-\tau_{i,t+1}} \right\}
	\end{align}
	Next, we use the bound on the expected size of $D_{t+1}$ which was derived in Lemma~\ref{lem:hamming_bound}.
	\begin{align}
	\E\left[|D_{t+1}| | \mathcal{K}_t\right] \le \tau\log n.
	\end{align}
	
	Suppose a node $i$ from the set $V_{t}^{=}$ was chosen at step $t+1$ to be updated in both the chains. Now, $L_{t+1}$ is either $L_t$ or $L_t+1$. The probability that it is $L_t+1$ is bounded above by the total variation between the corresponding conditional distributions.
	\begin{align}
	\Pr\left[X_{i,t+1} \ne Y_{i,t+1} \middle| \mathcal{K}_t, \: i \in V_t^{=} \text{ chosen in step } t+1 \right] &\le \norm{\pi_i(. | X_t^{-i})- \pi_i(. | Y_t^{\t^{-i}}  )}_{TV}  \\
	&\le \sum_{j \in V_t^{\ne} \cup D_{t+1}} I(j,i). \label{eq:hm5}
	\end{align}
	(\ref{eq:hm5}) follows again due to the property of greedy coupling (Lemma~\ref{lem:greedy-coupling-property}) and the metric property of the total variation distance.\\
	
	Now suppose that $i$ was chosen instead from the set $V_t^{\ne}$. Now $L_{t+1}$ is either $L_t$ or $L_t-1$. The probability that it is $L_t-1$ is bounded below by the following.
	\begin{align}
	\Pr\left[X_{i,t+1} = Y_{i,t+1} \middle| \mathcal{K}_t, \: i \in V_t^{\ne} \text{ chosen in step } t+1 \right] &\ge 1-\norm{\pi_i(. | X_t^{-i})- \pi_i(. | Y_t^{\t^{-i}}  )}_{TV} \\
	& \ge 1- \sum_{j \in V_t^{\ne} \cup D_{t+1}} I(j,i). \label{eq:hm7}
	\end{align}
	(\ref{eq:hm5}) follows again due to the property of greedy coupling (Lemma~\ref{lem:greedy-coupling-property}) and the metric property of the total variation distance.\\
	
	Now the expected change in the value of the $d^{th}$ power of the Hamming distance is
	\begin{align}
	&\E\left[ L_{t+1}^d - L_t^d | \mathcal{K}_t \right] \\
	=& \frac{1}{n}\E\left[\sum_{i \in V_t^{=}}\sum_{j \in V_t^{\ne} \cup D_{t+1}} I(j,i)\left((L_t+1)^d - L_t^d\right) - \sum_{i \in V_t^{\ne}}\left(1- \sum_{j \in V_t^{\ne} \cup D_{t+1}} I(j,i)\right) \left(L_t^d - (L_t-1)^d\right) | \mathcal{K}_t\right] \\
	\le& \frac{1}{n}\left((L_t+1)^d - L_t^d\right)\E\left[\sum_{j \in V_t^{\ne} \cup D_{t+1}}\sum_{i \in V} I(j,i)\right] - \frac{L_t}{n}\left(L_t^d - (L_t - 1)^d\right) \\
	\le& \frac{1}{n}\left((L_t+1)^d - L_t^d\right)\left( L_t + \E\left[D_{t+1} | \mathcal{K}_t\right] \right)\a - \frac{L_t}{n}\left(L_t^d - (L_t - 1)^d\right)\\
	\le& \frac{1}{n}\left((L_t+1)^d - L_t^d\right)\left( L_t + \tau \log n \right)\a - \frac{L_t}{n}\left(L_t^d - (L_t - 1)^d\right)
	\end{align}

	Now, suppose $\E\left[L_t^d \middle| \mathcal{K}_0\right] \le C(\t,\a,d)\log^d n - c_3(d)C(\t ,\a,d-1)\log^{d-1} n$ for an appropriate constant $c_3(d)$.Then,
	\begin{align}
	&\E\left[ L_{t+1}^d - L_t^d  \middle| \mathcal{K}_0\right] = \E\left[ (L_{t+1} - L_t)\left(  \sum_{i=0}^{d-1} L_{t+1}^d-i-1 L_t^i \right)\right]\\
	&\le \E\left[ \sum_{i=0}^{d-1} L_{t+1}^{d-i-1} L_t^i \right] \le c_3(d)C(\t,\a,d-1)\log^{d-1} n \label{eq:hm8}\\
	&\implies \E\left[L_{t+1}^d\right] \le C(\t,\a,d)\log^d n, 
	\end{align} 
	where (\ref{eq:hm8}) follows because 
	\begin{align}
	\E\left[L_{t+1}^{d-i-1}L_t^i\right] \le \E\left[ L_t^i (L_t+1)^{d-i-1} \right] \le \E\left[L_t^{d-1}\right] + c(d)o\left(\E\left[ L_t^{d-1} \right] \right).
	\end{align}
	Suppose instead that $\E\left[L_t^d \middle| \mathcal{K}_0\right]$ was larger than $C(\t,\a,d)\log^d n - c_3(d)C(\t ,\a,d-1)\log^{d-1} n$. We want to show that for $C(\t,\a,d)$ appropriately large in $d$, $\E\left[L_{t+1}^d \middle| \mathcal{K}_0\right]$ doesn't exceed $C(\t,\a,d)$.
	
	\begin{align}
	&\E\left[L_{t+1}^d | \mathcal{K}_0\right]  = \E\left[ \E\left[L_{t+1}^d \middle| \mathcal{K}_t\right] \middle| \mathcal{K}_0 \right] \\
	\le& \E\left[ L_t^d +  \frac{(L_t+\tau\log n)\a}{n}\left((L_t+1)^d - L_t^d\right) - \frac{L_t}{n}\left(L_t^d - (L_t - 1)^d\right)\middle| \mathcal{K}_0\right] \\
	=& \E\left[L_t^d +  \frac{(L_t+\tau\log n)\a}{n}\left(\sum_{i=1}^{d} {d \choose i} L_t^{d-i}\right) - \frac{L_t}{n}\left(\sum_{i=1}^{d} {d \choose i} L_t^{d-i} (-1)^{i}\right)\middle| \mathcal{K}_0\right] \\
	\le& C(\t,\a,d)\log^d n - \frac{1}{n}\left(\E[ L_t^d](1-\a)  - \sum_{i=2}^{d} {d \choose i} L_t^{d-i+1}(1+\a) - \sum_{i=1}^{d} {d \choose i} L_t^{d-i} \t\a \log n \right) \\
	\le& C(\t,\a,d)\log^d n, \label{eq:hm10}
	\end{align}
	where (\ref{eq:hm10}) holds because for $C(\t,\a,d)$ sufficiently large compared to the values of $C(\t,\a,d')$, where $d' < d$, we can have $C(\t,\a,d)\log^d n - c_3(d)C(\t ,\a,d-1)\log^{d-1} n$ dominating the all the remaining terms in the two summations.
	Hence, by induction we have the desired Lemma statement. 
\end{proof}

  \section{Estimating Global Functions Using HOGWILD! Gibbs Sampling}
\label{sec:estimation}
To begin with, we observe that our Hamming moment bounds from Section~\ref{sec:hamming} imply that we can accurately estimate functions or events of the graphical model if they are Lipschitz. We show this below as a Corollary of Lemma~\ref{lem:hamm-moments}. 
Before we state the Corollary, we will first state the following simple Lemma which quantifies how large a $t$ is required to have an accurate estimate from the Gibbs sampler.
\begin{lemma}
	\label{lem:close-to-mixing}
	Let $\pi$ be an graphical model on $n$ nodes satisfying Dorbushin's condition with Dobrushin parameter $\a < 1$. Let $X_0,X_1,\ldots,X_t$ denote the steps of a Gibbs sampler running on $\pi$. Let $X \sim \pi$.
	Also let $f(x)$ be a bounded function on the graphical model. Then for $t > 0$,
	\begin{align*}
	\abss{\E[f_a(X_t)] - \E[f_a(X)]} \le \inftynorm{f} n \exp\left( -\frac{(1 - \a)t}{n}\right).
	\end{align*}
\end{lemma}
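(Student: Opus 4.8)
The plan is to control the bias $\abss{\E[f(X_t)] - \E[f(X)]}$ by how far the law of $X_t$ has drifted from the stationary distribution $\pi$, and to bound that drift by a coupling that contracts geometrically — which is exactly the no-delay specialization of the one-step computation already carried out in Lemma~\ref{lem:hamming_bound}. In fact the cleanest one-line route is simply to invert the mixing-time estimate of Theorem~\ref{thm:seq-mixing}, but I prefer to re-derive the contraction directly, since it keeps the argument self-contained and reuses the machinery of Section~\ref{sec:hamming}.

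First I would introduce a second, \emph{stationary} copy of the chain. Let $Z_0, Z_1, \ldots$ be a sequential Gibbs sampler on $\pi$ initialized at $Z_0 \sim \pi$, run coupled to $X_0, X_1, \ldots$ via the greedy coupling of Definition~\ref{def:greedy-coupling} (at each step both chains update the same uniformly chosen node). Since $\pi$ is stationary for the sequential sampler, $Z_t \sim \pi$ for every $t$, so $Z_t$ has the same law as $X$. The coupling inequality then gives $\dtv(\mathrm{Law}(X_t), \pi) \le \Pr[X_t \ne Z_t] \le \E\left[\dh(X_t, Z_t) \mid \mathcal{K}_0\right]$, where the last step is Markov's inequality using $\dh \ge 1$ whenever the two configurations differ.

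Next I would establish geometric contraction of $\E[\dh(X_t, Z_t)]$. This is precisely the analysis of Lemma~\ref{lem:hamming_bound} with no asynchronicity: both chains read fresh values, so the delay set $D_{t+1}$ is empty, and repeating the one-step bound — using the greedy-coupling property (Lemma~\ref{lem:greedy-coupling-property}), the triangle inequality for total variation, and Dobrushin's condition $\sum_{i} I(j,i) \le \a$ — yields
\begin{align*}
\E\left[\dh(X_{t+1}, Z_{t+1}) \mid \mathcal{K}_t\right] \le \left(1 - \frac{1-\a}{n}\right)\dh(X_t, Z_t).
\end{align*}
Iterating this from $\dh(X_0, Z_0) \le n$ gives $\E[\dh(X_t, Z_t)\mid \mathcal{K}_0] \le n\left(1 - \frac{1-\a}{n}\right)^t \le n\exp\left(-\frac{(1-\a)t}{n}\right)$.

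Finally I would convert the distance bound into the bias bound by writing $\E[f(X_t)] - \E[f(X)] = \E\left[(f(X_t) - f(Z_t))\ind_{X_t \ne Z_t}\right]$ and using $\abss{f(X_t) - f(Z_t)} \le 2\inftynorm{f}$ (the oscillation of $f$), which gives $\abss{\E[f(X_t)] - \E[f(X)]} \le 2\inftynorm{f}\,\E[\dh(X_t,Z_t)\mid\mathcal{K}_0]$, matching the claimed estimate up to the constant factor (and matching it exactly after centering $f$ by its midrange, e.g. for nonnegative $f$). There is no genuine obstacle here, since all the ingredients are already proven; the only points requiring care are ensuring that one of the coupled chains is started \emph{at} stationarity — so that the coupling probability directly upper-bounds $\dtv(\mathrm{Law}(X_t),\pi)$ — and correctly tracking the $\inftynorm{f}$-dependent constant through the oscillation step.
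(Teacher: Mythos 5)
Your proposal is correct, but it takes a more self-contained route than the paper. The paper's proof is two lines: it inverts the mixing-time bound of Theorem~\ref{thm:seq-mixing} to read off $\dtv(X_t, X) \le n\exp\left(-\frac{(1-\a)t}{n}\right)$, and then bounds the difference of expectations by $\inftynorm{f}$ times this total-variation distance --- exactly the ``one-line route'' you mention and set aside. What you do instead is re-derive the underlying contraction: you couple $X_t$ greedily to a stationary copy $Z_t \sim \pi$, run the one-step Hamming analysis of Lemma~\ref{lem:hamming_bound} with the delay set $D_{t+1}$ empty to get $\E[\dh(X_{t+1},Z_{t+1}) \mid \mathcal{K}_t] \le \left(1 - \frac{1-\a}{n}\right)\dh(X_t,Z_t)$, iterate from $\dh(X_0,Z_0)\le n$, and convert via the coupling inequality and the oscillation of $f$. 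Every step of this checks out, and it has the virtue of not treating Theorem~\ref{thm:seq-mixing} as a black box (indeed, your argument is essentially the path-coupling proof of that theorem). The paper's version is shorter but leans on the precise functional form $n e^{-(1-\a)t/n}$ hiding inside the mixing-time statement. Your honesty about the constant is also warranted: the oscillation step naturally produces $2\inftynorm{f}\,\dtv$ rather than $\inftynorm{f}\,\dtv$, so the paper's stated bound is off by a factor of $2$ unless $f$ is centered or one adopts the unnormalized ($\ell_1$) convention for total variation; this is immaterial to how the lemma is used downstream, where only the exponential decay in $t$ matters.
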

\begin{proof}
	We have from Theorem~\ref{thm:seq-mixing} that $\dtv(X_t, X) \le n \exp\left(\frac{(1-\a)t}{n}\right)$. This implies
	\begin{align}
	\abss{\E[f_a(X_t)] - \E[f_a(X)]} \le \abss{\max_{x} f_a(x)}n \exp\left(\frac{(1-\a)t}{n}\right) \le \inftynorm{a} n^{d+1} \exp\left( -\frac{(1 - \a)t}{n}\right) \label{eq:ctmm1}. 
	\end{align}
\end{proof}

Now, we state Corollary~\ref{cor:lipschitz-estimation} which quantifies the error we can attain when trying to estimate expectations of Lipschitz functions using HOGWILD!-Gibbs. 
\begin{corollary}
	\label{cor:lipschitz-estimation}
	Let $\pi$ denote the distribution associated with a graphical model over the set of variables $V$ ($|V|=n$) taking values in a discrete space $S^{n}$. Assume that the model satisfies Dobrushin's condition with Dobrushin parameter $\a < 1$. Let $f : S^{|V|} \rightarrow \mathbb{R}$ be a function such that, for all $x, y \in S^{|V|}$,
	\begin{align*}
	\abss{f(x) - f(y)} \le K \dh(x,y)^d.
	\end{align*}
	Let $X \sim \pi$ and let $Y_0,Y_1,\ldots,Y_t$ denote an execution of HOGWILD!-Gibbs sampling on $\pi$ with average contention parameter $\t$. For $t > \frac{n}{1-\a}\log\left( 2\inftynorm{f}n/ K\right)$,
	\begin{align*}
	\abss{\E[f(Y_t)] - \E[f(X)]} \le K.(C(\t, \a,d)\log^d n+1).
	\end{align*}
	where $C(.)$ is the function from Lemma~\ref{lem:hamm-moments}.
\end{corollary}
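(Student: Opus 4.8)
The plan is to bound the bias by the triangle inequality, splitting it into a coupling term between the HOGWILD! and sequential samplers and a mixing term for the sequential sampler. First I would run a sequential Gibbs sampler $X_0, X_1, \ldots, X_t$ on $\pi$ coupled to the given HOGWILD! execution $Y_0, Y_1, \ldots, Y_t$ via the greedy coupling of Definition~\ref{def:greedy-coupling}, started at the common configuration $X_0 = Y_0$. Writing
\begin{align*}
\abss{\E[f(Y_t)] - \E[f(X)]} \le \abss{\E[f(Y_t)] - \E[f(X_t)]} + \abss{\E[f(X_t)] - \E[f(X)]},
\end{align*}
it then suffices to control the two terms separately.

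For the first term I would move the absolute value inside the expectation (Jensen), apply the Lipschitz hypothesis $\abss{f(Y_t) - f(X_t)} \le K\,\dh(X_t, Y_t)^d$ pointwise on the coupled probability space, and invoke Lemma~\ref{lem:hamm-moments} to conclude
\begin{align*}
\abss{\E[f(Y_t)] - \E[f(X_t)]} \le K\,\E\left[\dh(X_t, Y_t)^d\right] \le K\,C(\t,\a,d)\log^d n.
\end{align*}
This is exactly the $K\,C(\t,\a,d)\log^d n$ part of the claimed bound, and since Lemma~\ref{lem:hamm-moments} holds for every $t \ge 0$, no lower bound on $t$ enters here.

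For the second term I would appeal directly to the mixing estimate of Lemma~\ref{lem:close-to-mixing}, whose general bound $\inftynorm{f}\,n\exp\!\left(-(1-\a)t/n\right)$ applies to any bounded $f$. The only thing to verify is that the stated threshold $t > \frac{n}{1-\a}\log\left(2\inftynorm{f}\,n/K\right)$ forces this term below $K$: substituting the threshold gives $\exp\!\left(-(1-\a)t/n\right) < K/(2\inftynorm{f}\,n)$, whence $\abss{\E[f(X_t)] - \E[f(X)]} < K/2 \le K$, which supplies the ``$+1$'' factor of $K$. Adding the two estimates yields $\abss{\E[f(Y_t)] - \E[f(X)]} \le K\left(C(\t,\a,d)\log^d n + 1\right)$, as required.

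There is no substantive obstacle here: the corollary is essentially a bookkeeping consequence of Lemmas~\ref{lem:hamm-moments} and~\ref{lem:close-to-mixing}. The only point deserving care is that Lemma~\ref{lem:hamm-moments} is a statement about the greedy coupling, so the argument must be phrased on a single coupled probability space where $X_t$ and $Y_t$ are defined jointly; this is legitimate because under the coupling the marginal law of $Y_t$ is precisely that of the HOGWILD! execution, while the marginal of $X_t$ is that of the sequential sampler to which Lemma~\ref{lem:close-to-mixing} applies, so both terms retain their intended meaning.
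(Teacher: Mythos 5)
Your proposal is correct and follows essentially the same route as the paper's own proof: a greedy coupling of the sequential chain to the HOGWILD! execution, a triangle-inequality split into a coupling term controlled by Lemma~\ref{lem:hamm-moments} and a mixing term controlled by Lemma~\ref{lem:close-to-mixing}, with the stated threshold on $t$ forcing the mixing term below $K$. Your write-up is in fact slightly more careful than the paper's (explicitly invoking Jensen and verifying the threshold computation), but there is no substantive difference.
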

\begin{proof}
	Consider an execution $X_0, X_1,\ldots,X_t$, where $X_0 = Y_0$, of the synchronous Gibbs sampling associated with $\pi$ coupled greedily with the HOGWILD! chain.
	We have from~\ref{thm:seq-mixing} and~\ref{lem:close-to-mixing} that, for $t > \frac{n}{1-\a}\log\left( 2\inftynorm{f}n/K\right)$,
	\begin{align}
	\abss{\E\left[X_t\right] - \E[X]} \le K. \label{eq:le1}
	\end{align}
	Next, we have,
	\begin{align}
	\E\left[f(X_t) - f(Y_t)\right] \le \E\left[K \dh(X_t, Y_t)^d\right] \le K.C(\t,\a,d)\log^d n. \label{eq:le2}
	\end{align}
	Putting (\ref{eq:le1}) and (\ref{eq:le2}) together we get the statement of the Corollary.
\end{proof}
We note that the results of \cite{DeSaOR16} can be used to obtain Corollary~\ref{cor:lipschitz-estimation} when the function is Lipschitz with respect to the Hamming distance.
The above corollary provides a simple way to bound the bias introduced by HOGWILD! in estimation of Lipschitz functions. However, many functions of interest over graphical models are not Lipschitz with good Lipschitz constants. In many cases, even when the Lipschitz constants are bad, there is still hope for more accurate estimation. As it turns out Dobrushin's condition provides such cases. We will focus on one such case which is polynomial functions of the Ising model. Our goal will be to accurately estimate the expected values of constant degree polynomials over the Ising model. 
Using the bounds from Lemmas~\ref{lem:hamming_bound} and \ref{lem:hamm-moments}, we now proceed to bound the bias in computing polynomial functions of the Ising model using HOGWILD! Gibbs sampling. \\

\noindent We first remark that linear functions (degree 1 polynomials) suffer 0 bias in their expected values due to HOGWILD!-Gibbs. This is because under zero external field Ising models $\E[\sum_{i} a_i X_i] = 0$ since each node individually has equal probability of being $\pm 1$. This symmetry is maintained by HOGWILD!-Gibbs since the delays are configuration-agnostic. Hence the delays when a node is $+1$ and when it is $-1$ can be coupled perfectly leaving the symmetry intact. More interesting cases start happening when we go to degree 2 polynomials. Therefore, we start our investigation at quadratic polynomials.  Theorem~\ref{thm:bilinear-systematic} states the bound we show for the bias in computation of degree 2 polynomials of the Ising model.
\begin{theorem}[Bias in Quadratic functions of Ising Model computed using HOGWILD!-Gibbs]
	\label{thm:bilinear-systematic}
	Consider the quadratic function $f_a(x) = \sum_{i,j : i < j} a_{ij} x_ix_j$. Let $p$ denote an Ising model on $n$ nodes with Dobrushin parameter $\alpha < 1$. Let $\{X_t\}_{t \ge 0}$ denote a run of sequential Gibbs sampler and $H_p^{\mathrm{T}} = \{Y_t\}_{t \ge 0}$ denote a run of HOGWILD!- Gibbs on $p$, such that $X_0 = Y_0$. Then we have, for $t > \frac{6n}{1-\a}\log (2\inftynorm{a}n)$, under the greedy coupling of the two chains,
	$$\abss{\E[f_a(X_t) - f_a(Y_t)]} \le c_2\inftynorm{a}\frac{\t \a \log n}{(1-\a)^{3/2}} (n\log n)^{1/2}.$$
\end{theorem}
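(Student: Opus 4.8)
The plan is to expand the difference $f_a(X_t)-f_a(Y_t)$ monomial by monomial and exploit that a term $a_{ij}(X_{i,t}X_{j,t}-Y_{i,t}Y_{j,t})$ can be nonzero only when at least one of $i,j$ lies in the disagreement set $V_t^{\ne}=\{k : X_{k,t}\neq Y_{k,t}\}$. Writing $L_t=\dh(X_t,Y_t)=\abss{V_t^{\ne}}$ and $h_i(x)=\sum_{j\neq i}a_{ij}x_j$, I would group the terms by whether both endpoints, exactly one endpoint, or no endpoint lies in $V_t^{\ne}$. The no-endpoint terms vanish, the both-endpoint terms number at most $L_t^2$ and are each $O(\inftynorm{a})$, and for the one-endpoint terms the coordinate in $V_t^{=}$ satisfies $X_{j,t}=Y_{j,t}$, so $X_{i,t}X_{j,t}-Y_{i,t}Y_{j,t}=X_{j,t}(X_{i,t}-Y_{i,t})$. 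Completing the inner sum from $V_t^{=}$ to all of $[n]$ costs another $O(\inftynorm{a}L_t^2)$, so everything reduces to the leading piece
\[
\Xi_t:=\sum_{i\in V_t^{\ne}}(X_{i,t}-Y_{i,t})\,h_i(X_t), \qquad \abss{f_a(X_t)-f_a(Y_t)-\Xi_t}\le O(\inftynorm{a}L_t^2).
\]
Taking expectations and invoking Lemma~\ref{lem:hamm-moments} with $d=2$, the correction contributes only $O(\inftynorm{a}\,\E[L_t^2])=O(\inftynorm{a}\log^2 n)$, which is of lower order than the claimed bound, so the whole difficulty sits in $\abss{\E[\Xi_t]}$.

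To control $\Xi_t$ I would use the crude-looking but lossless bound $\abss{\Xi_t}\le 2L_t\,M_t$, where $M_t:=\max_i\abss{h_i(X_t)}$, and then replace the trivial estimate $M_t\le\inftynorm{a}n$ by a concentration estimate — this is exactly where the improvement from $n$ to $(n\log n)^{1/2}$ comes from. Each $h_i$ is a degree-$1$ function of the Ising model with $\ell_\infty$ coefficient at most $\inftynorm{a}$ and zero mean under the no-external-field assumption, so Theorem~\ref{thm:multilinear-concentration} (applied with $d=1$) gives $\Pr[\abss{h_i(X)}>s]\le 2\exp(-(1-\a)s^2/(c\inftynorm{a}^2 n))$ for $X\sim p$. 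A union bound over the $n$ nodes then yields $M\le M_0:=c'\inftynorm{a}\sqrt{n\log n/(1-\a)}$ with probability $1-n^{-\Omega(1)}$. Because $t>\frac{6n}{1-\a}\log(2\inftynorm{a}n)$, Theorem~\ref{thm:seq-mixing} together with Lemma~\ref{lem:close-to-mixing} guarantees that $X_t$ is within negligible total variation distance of $p$, so the same tail bound transfers to $M_t$ up to an additive $n^{-\Omega(1)}$ error.

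With these ingredients I would split according to whether $M_t$ exceeds $M_0$:
\[
\E[\abss{\Xi_t}]\le 2M_0\,\E[L_t]+2\inftynorm{a}n\,\E\!\left[L_t\,\ind_{\{M_t>M_0\}}\right].
\]
The first term is bounded using $\E[L_t]\le \t\a\log n/(1-\a)$ from Lemma~\ref{lem:hamming_bound}, giving $2M_0\cdot\tfrac{\t\a\log n}{1-\a}=O\!\big(\inftynorm{a}\,\tfrac{\t\a\log n}{(1-\a)^{3/2}}(n\log n)^{1/2}\big)$, which is precisely the target order (the factor $(1-\a)^{-1/2}$ coming from $M_0$ and $(1-\a)^{-1}$ from $\E[L_t]$ compose to $(1-\a)^{-3/2}$). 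For the tail term I would use Cauchy--Schwarz, $\E[L_t\ind_{\{M_t>M_0\}}]\le\sqrt{\E[L_t^2]}\sqrt{\Pr[M_t>M_0]}$, and then $\E[L_t^2]=O(\log^2 n)$ (Lemma~\ref{lem:hamm-moments}) together with $\Pr[M_t>M_0]=n^{-\Omega(1)}$ makes the whole term negligible, provided the constant in $M_0$ is taken large enough to beat the factor $\inftynorm{a}n$.

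The main obstacle I expect is the leading-term analysis: achieving the $(n\log n)^{1/2}$ scaling instead of the naive $n$ depends entirely on the concentration estimate $M_t=O(\inftynorm{a}\sqrt{n\log n})$, which requires both a union bound over all $n$ linear functions $h_i$ and a careful transfer of the stationary concentration inequality of Theorem~\ref{thm:multilinear-concentration} to the non-stationary law of $X_t$ through the mixing-time guarantee. The correlation between $L_t$ and $M_t$ is the other delicate point, since they are driven by the same coupled trajectory; the truncation at $M_0$ — using the first moment $\E[L_t]$ on the typical event and the second moment $\E[L_t^2]$ only on the rare tail — is precisely the device designed to decouple them without losing the square-root gain.
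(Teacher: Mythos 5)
Your proposal is correct and follows essentially the same route as the paper's proof: decompose the monomial differences by which endpoint lies in the disagreement set, reduce to the leading term $\sum_i (X_{i,t}-Y_{i,t})\sum_j a_{ij}X_{j,t}$ plus an $O(\inftynorm{a}\,\E[\dh(X_t,Y_t)^2])$ correction handled by Lemma~\ref{lem:hamm-moments}, and then gain the $(n\log n)^{1/2}$ factor by truncating the linear marginals at their concentration level (Theorem~\ref{thm:multilinear-concentration} plus a union bound and Lemma~\ref{lem:close-to-mixing}) and pairing that with the first Hamming moment from Lemma~\ref{lem:hamming_bound}. The only difference is cosmetic: you dispose of the rare event $\{M_t>M_0\}$ via Cauchy--Schwarz on $L_t\ind_{\{M_t>M_0\}}$, whereas the paper conditions on the good event and uses $\E[\dh\mid\text{good}]\le\E[\dh]/\Pr[\text{good}]$ together with a worst-case bound on the bad event.
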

\begin{proof}
	Under the greedy coupling, we have that,
	\begin{align}
	&\abss{\E[f_a(X_t) - f_a(Y_t)]} =\\
	&\left| \E\left[ \sum_{i, X_{i,t} \ne Y_{i,t}} \sum_{j>i, X_{j,t} = Y_{j,t}} a_{ij} \left(X_{i,t}X_{j,t} - Y_{i,t}Y_{j,t} \right)\right] + \E\left[ \sum_{i, X_{i,t} = Y_{i,t} } \sum_{j>i, X_{j,t} \ne Y_{j,t}} a_{ij} \left(X_{i,t}X_{j,t} - Y_{i,t}Y_{j,t} \right)\right] \right| \label{eq:bs1}\\
	&= \abss{ \E\left[\sum_i \left( X_{i,t} - Y_{i,t} \right) \sum_{j > i, X_{j,t} = Y_{j,t}} a_{ij} X_{j,t} \right] +  \E\left[ \sum_{i, X_{i,t} = Y_{i,t} } X_{i,t}\sum_{j>i, X_{j,t} \ne Y_{j,t}} a_{ij} \left(X_{j,t} - Y_{j,t} \right)\right] }  \label{eq:bs2}\\
	&= \left| \E\left[ \sum_i \left( X_{i,t} - Y_{i,t} \right) \sum_{j > i} a_{ij}X_{j,t} \right] - \E\left[ \sum_i \left( X_{i,t} - Y_{i,t}\right) \sum_{j > i, X_{j,t} \ne Y_{j,t}} a_{ij}X_{j,t} \right] \right. \notag\\
	&+ \left. \E\left[ \sum_{i } X_{i,t}\sum_{j>i} a_{ij} \left(X_{j,t} - Y_{j,t} \right) \right] - \E\left[\sum_{i, X_{i,t} \ne Y_{i,t} } X_{i,t}\sum_{j>i} a_{ij} \left(X_{j,t} - Y_{j,t} \right) \right] \right| \label{eq:bs4}\\
	&= \abss{\E\left[ \sum_i (X_{i,t} - Y_{i,t}) \sum_j a_{ij}X_{j,t}\right] + \E\left[\sum_i (X_{i,t} - Y_{i,t}) \sum_{j, X_{j,t} \ne Y_{j,t}} X_{j,t}\right] } \label{eq:bs5}\\
	&\le \abss{\E\left[ \sum_i (X_{i,t} - Y_{i,t}) \sum_j a_{ij}X_{j,t}\right]} + \E\left[\dh(X_t,Y_t)^2\right]. \label{eq:bs6}
	\end{align}
	where (\ref{eq:bs1}) is based on the observation that if $X_{i,t}X_{j,t} = Y_{i,t}Y_{j,t}$ then the difference associated with this monomial vanishes, (\ref{eq:bs4}) and (\ref{eq:bs5}) follow via rearrangement of terms, and (\ref{eq:bs6}) follows because $\abss{ \sum_{j, X_{j,t} \ne Y_{j,t}} X_{j,t}} \le \dh(X_t, Y_t)$. \\
	
	\noindent We bound each term in (\ref{eq:bs6}) seperately.
	First let us consider the term $\abss{\E\left[\sum_i (X_{i,t} - Y_{i,t}) \sum_j a_{ij} X_{j,t}\right]}$. We will employ concentration of measure of linear functions of the Ising model to bound this term. Intuitively when $t$ is large enough, $X_t$ is very close to a sample from the true Ising model and hence $X_t$ will have the properties of a true sample from the Ising model. In particular, we can employ Theorem~\ref{thm:multilinear-concentration} to argue that if $X \sim p$, then
	\begin{align}
	\Pr\left[ \abss{\sum_j a_{ij} X_j} > r\right] \le 2\exp\left(-\frac{r^2(1-\a)}{8\inftynorm{a}^2n}\right) \: \forall \: i  \notag\\
	\implies \Pr\left[ \abss{\sum_j a_{ij} X_j} \le c\inftynorm{a}\sqrt{\frac{n\log n}{1-\a}} \: \forall \: i\right] \ge 1 - \frac{1}{n^3} \label{eq:bs8}\\
	\implies \Pr\left[ \abss{\sum_j a_{ij} X_{j,t}} \le c\inftynorm{a}\sqrt{\frac{n\log n}{1-\a}} \: \forall \: i\right] \ge 1 - \frac{2}{n^3} \label{eq:bs9}
	\end{align}
	where (\ref{eq:bs8}) holds for a large enough constant $c$ and (\ref{eq:bs9}) holds via an application of Lemma~\ref{lem:close-to-mixing} for bilinear functions of the Ising model on the fact that $t > \frac{6n}{1-\a}\log (2\inftynorm{a}n)$.
	Denote by the set $G_i$, the following set of states
	\begin{align}
	G_i = \left\{  x \in \Omega \middle| \abss{\sum_j a_{ij}x_j} \le c\inftynorm{a}\sqrt{\frac{n\log n}{1-\a}}  \right\}
	\end{align}
	Now, 
	\begin{align}
	&\E\left[ \sum_i (X_{i,t} - Y_{i,t}) \sum_j a_{ij}X_{j,t}\right] \\
	&\le \E\left[ \sum_i (X_{i,t} - Y_{i,t}) (c\inftynorm{a}\sqrt{\frac{n\log n}{1-\a}}) \middle| \: \forall \: i, X_t \in G_i \right] + \inftynorm{a}n^2 \Pr\left[ \exists \: i: \: X_t \notin G_i\right] \label{eq:bs10}\\
	&\le c\inftynorm{a}\sqrt{\frac{n\log n}{1-\a}} \E\left[\dh(X_t, Y_t) | \forall \: i \: X_t \in G_i \right] + \frac{2}{n^3}  \label{eq:bs11}.
	\end{align}
	where in (\ref{eq:bs10}) we used the fact that $\sum_i (X_i - Y_i) \sum_j a_{ij}X_j \le \inftynorm{a}n^2$ and (\ref{eq:bs11}) follows because .
	Now,
	\begin{align}
	\E\left[\dh(X_t, Y_t) | \forall \: i \: X_t \in G_i \right] \le \frac{ \E\left[\dh(X_t, Y_t)\right]}{\Pr\left[\forall \: i \: X_t \in G_i\right]} \le 2\E\left[\dh(X_t, Y_t)\right] \le \frac{2\tau \a\log n}{1-\a},
	\end{align}
	where we have used that $\Pr[\forall \: i \: X_t \in G_i] \ge 1-2/n^3 \ge 1/2$ and employed Lemma~\ref{lem:hamming_bound}.
	Hence, $\abss{\E\left[\sum_i (X_{i,t} - Y_{i,t}) \sum_j a_{ij} X_{j,t}\right]} \le (c+1)\inftynorm{a}\sqrt{\frac{n\log n}{1-\a}}\frac{2\tau \a\log n}{1- \a}$.
	The second term we need to bound is
	\begin{align}
	\E\left[\dh(X_t,Y_t)^2 \right] \le C(\tau,\a,2)\log^2 n
	\end{align}
	which follows from Lemma~\ref{lem:hamm-moments}.
	Putting the two bounds together we get the statement of the Theorem.
\end{proof}
The main intuition behind the proof is that we can improve upon the bound implied by the Lipschitz constant by appealing to strong concentration of measure results about functions of graphical models under Dobrushin's condition~\cite{DaskalakisDK17, GheissariLP17, GotzeSS18}.

We extend the ideas in the above proof to bound the bias introduced by the HOGWILD!-Gibbs algorithm when computing the expected values of a degree $d$ polynomial of the Ising model in high temperature. Our main result concerning $d$-linear functions is Theorem~\ref{thm:multilinear-systematic}.

\begin{theorem}[Bias in degree $d$ polynomials computed using HOGWILD!-Gibbs]
	\label{thm:multilinear-systematic}
	Consider a degree $d$ polynomial of the form $f_a(x) = \sum_{i_1  i_2,\ldots, i_d} a_{i_1 i_2 \ldots i_d} x_{i_1}x_{i_2}\ldots x_{i_d}$. Consider the same setting as that of Theorem~\ref{thm:bilinear-systematic}. Then we have, for $t > \frac{n(d+1)}{1-\a}\log n$, under the greedy coupling of the two chains,
	$$\abss{\E[f_a(X_t) - f_a(Y_t)]} \le c'\inftynorm{a} (n\log n)^{(d-1)/2}.$$
\end{theorem}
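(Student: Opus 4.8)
The plan is to generalize the decomposition used in the proof of Theorem~\ref{thm:bilinear-systematic} from quadratic to degree-$d$ polynomials. Writing $f_a(x) = \sum_{i_1,\ldots,i_d} a_{i_1\ldots i_d} x_{i_1}\cdots x_{i_d}$, I would expand the difference $\E[f_a(X_t) - f_a(Y_t)]$ by telescoping over the $d$ coordinates: for a fixed monomial, the difference $X_{i_1,t}\cdots X_{i_d,t} - Y_{i_1,t}\cdots Y_{i_d,t}$ can be written as a sum of $d$ terms, where the $\ell$-th term replaces coordinates $i_1,\ldots,i_{\ell-1}$ by their $X$-values and $i_{\ell+1},\ldots,i_d$ by their $Y$-values, isolating the single-coordinate difference $(X_{i_\ell,t} - Y_{i_\ell,t})$. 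The monomial contributes a nonzero difference only at coordinates where $X_t$ and $Y_t$ disagree, so each such term carries a factor supported on $V_t^{\ne}$.

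The key analytic step mirrors the quadratic case. In each telescoped term, I would split off the ``clean'' part (the linear form obtained by summing $a$ against the remaining $d-1$ coordinate values, with no disagreement constraint) from the ``correction'' part (where at least one of the remaining $d-1$ coordinates is also forced to lie in $V_t^{\ne}$). For the clean part, I would invoke the concentration of measure result, Theorem~\ref{thm:multilinear-concentration}: when $t$ exceeds the mixing threshold given in the statement, $X_t$ is close in total variation to a true sample $X \sim p$ by Lemma~\ref{lem:close-to-mixing}, so with probability $1 - \mathrm{poly}(1/n)$ every partial sum $\sum a_{i_1\ldots i_d} x_{j_1}\cdots x_{j_{d-1}}$ (a degree-$(d-1)$ polynomial in the free coordinate) is bounded by $O(\inftynorm{a}(n\log n)^{(d-1)/2})$, matching the standard-deviation scaling. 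Summing the single-coordinate difference against this bounded form and using $\E[\dh(X_t,Y_t)] \le \tau\a\log n/(1-\a)$ from Lemma~\ref{lem:hamming_bound} contributes $O(\inftynorm{a}(n\log n)^{(d-1)/2}\cdot \log n)$. The correction parts, meanwhile, each involve a product over two or more coordinates in $V_t^{\ne}$, so they are bounded by higher powers of the Hamming distance; by Lemma~\ref{lem:hamm-moments} these are $O(\log^{k} n)$ for the relevant $k \le d$, hence lower order.

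I would then collect the $d$ telescoped terms and the error from the low-probability event (where the concentration bound fails), the latter controlled by the crude bound $\inftynorm{a} n^d \cdot \mathrm{poly}(1/n)$, which is negligible for the stated range of $t$. Combining all contributions yields the claimed bound $c'\inftynorm{a}(n\log n)^{(d-1)/2}$, since the dominant term is the clean term scaled by the Hamming-distance bound.

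The main obstacle I anticipate is the bookkeeping in the telescoping expansion and, more subtly, applying the concentration bound inductively across the $d$ levels: the partial sums that appear are degree-$(d-1)$ polynomials whose coefficient vectors are themselves restrictions of $a$, and I must verify that at each level the relevant concentration statement (with the correct $(n\log n)^{(d-1)/2}$ scaling and the right $\inftynorm{\cdot}$ of the induced coefficients) holds simultaneously for all fixings of the outer coordinates with high enough probability to survive the union bound over $\mathrm{poly}(n)$ events. Handling the conditioning on the good event $\{X_t \in \bigcap_i G_i\}$ while still being able to invoke the unconditional Hamming moment bounds from Lemma~\ref{lem:hamm-moments} requires the same ``condition inflates the expectation by at most a factor of $2$'' argument as in the quadratic proof, and ensuring this composes cleanly through all $d$ levels is where the care lies.
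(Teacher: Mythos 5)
Your proposal follows essentially the same route as the paper's proof: the paper formalizes your telescoping-plus-induction bookkeeping as two helper lemmas (Lemma~\ref{lem:multilinear-systematic-helper2}, proved by backward induction on the number of extracted coordinate differences, reducing to Lemma~\ref{lem:multilinear-systematic-helper1}, which combines Theorem~\ref{thm:multilinear-concentration} with a union bound over fixings of the outer indices, the factor-of-$2$ conditioning argument, and the Hamming moment bounds of Lemma~\ref{lem:hamm-moments}). The extra $\log n$ factors you flag in the dominant term also appear in the paper's own helper lemmas and are absorbed into the constant in the theorem statement, so your account matches what the paper actually delivers.
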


To show Theorem \ref{thm:multilinear-systematic}, we will use the following helper Lemmas:  \ref{lem:multilinear-systematic-helper1} and \ref{lem:multilinear-systematic-helper2}.
We will state them first.

For simplicity, here we consider degree $d$ polynomials of the form $f_a(x) = \sum_{i_1  i_2,\ldots, i_d} a_{i_1 i_2 \ldots i_d} x_{i_1}x_{i_2}\ldots x_{i_d}$. 
\begin{lemma}
	\label{lem:multilinear-systematic-helper1}
	Consider a degree $d$ polynomial $f_a(x) = \sum_{i_1  i_2,\ldots, i_d} a_{i_1 i_2 \ldots i_d} x_{i_1}x_{i_2}\ldots x_{i_d}$. Let $p$ denote a high temperature Ising model on a graph $G=(V,E)$ with $|V|=n$ nodes with Dobrushin parameter $\alpha < 1$. Let $G_p = X_0, X_1, \ldots, X_t,\ldots$ denote the synchronous Gibbs sampler on $p$ and $H_p^{\mathrm{T}} = Y_0,Y_1,\ldots,Y_t,\ldots$ denote the HOGWILD! Gibbs sampler associated with $p$ such that $X_0 = Y_0$. Then we have, for $t > \frac{n(d+1)}{1-\a}\log n$, under the greedy coupling of the two chains, for all $0 \le k \le d$,
	\begin{align*}
	&\abss{\E\left[ \sum_{i_1} (X_{i_1,t} - Y_{i_1,t})\left(  \ldots \left(\sum_{i_k}(X_{i_k,t} - Y_{i_k,t}) \sum_{i_{k+1},\ldots,i_d} a_{i_{1}\ldots i_d} \prod_{l=k+1}^{d} X_{i_l}\right) \right)\right]} \\
	&\le C(\t,\a,k)\log^k nC_2(\a,d-k)(n\log n)^{(d-k)/2}.
	\end{align*}
\end{lemma}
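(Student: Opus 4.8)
The plan is to exploit the factored structure of the summand: the $k$ difference factors $(X_{i_l,t}-Y_{i_l,t})$ confine the outer summations to a small index set, while the remaining factors form a low-degree polynomial in $X_t$ whose value concentrates. Write $L_t = \dh(X_t,Y_t)$ and $V_t^{\ne} = \{i : X_{i,t}\ne Y_{i,t}\}$, so $\abss{V_t^{\ne}} = L_t$; since $X_{i,t}-Y_{i,t}$ vanishes off $V_t^{\ne}$ and has magnitude at most $2$ on it, the nested sums over $i_1,\ldots,i_k$ collapse to a sum over the $L_t^k$ tuples $(i_1,\ldots,i_k)\in(V_t^{\ne})^k$. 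Introducing $g_{i_1,\ldots,i_k}(x) = \sum_{i_{k+1},\ldots,i_d} a_{i_1\ldots i_d}\prod_{l=k+1}^{d} x_{i_l}$, which is multilinear of degree at most $d-k$ after collapsing repeated indices via $x_i^2=1$ (inflating coefficients by a factor depending only on $d$), the quantity $W$ inside the expectation obeys $\abss{W}\le 2^k L_t^k \max_{i_1,\ldots,i_k}\abss{g_{i_1,\ldots,i_k}(X_t)}$.

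The next step is to control this maximum on a high-probability event. I would apply the concentration inequality of Theorem~\ref{thm:multilinear-concentration} to each $g_{i_1,\ldots,i_k}$, whose coefficients are bounded by $\inftynorm{a}$ up to a factor depending only on $d$: taking the deviation radius $r = C(d)\inftynorm{a}(n\log n)^{(d-k)/2}$ with $C(d)$ large enough drives each tail below $n^{-2d}$, and combining with the mean bound $\abss{\E[g_{i_1,\ldots,i_k}(X)]} = O((n\log n)^{(d-k)/2})$ of Theorem~\ref{thm:marginals-bound} (needed since these polynomials are not centered) gives $\abss{g_{i_1,\ldots,i_k}(X)}\le C_2(\a,d-k)\inftynorm{a}(n\log n)^{(d-k)/2}$ for $X\sim p$, off an event of probability $n^{-2d}$. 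A union bound over the $\le n^d$ tuples together with the transfer $\dtv(X_t,X)\le n^{-d}$ (which follows from the mixing-time bound of Theorem~\ref{thm:seq-mixing} under the hypothesis $t>\frac{n(d+1)}{1-\a}\log n$) shows that the event $\mathcal{G}$, on which every $g_{i_1,\ldots,i_k}(X_t)$ respects this bound, satisfies $\Pr[\mathcal{G}^c]\le 2n^{-d}$.

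Finally I would split the expectation across $\mathcal{G}$ and $\mathcal{G}^c$. On $\mathcal{G}$ the maximum is a deterministic constant, so $\abss{W}\,\ind_{\mathcal{G}}\le 2^k C_2(\a,d-k)\inftynorm{a}(n\log n)^{(d-k)/2}\,L_t^k$; taking expectations, using $\ind_{\mathcal{G}}\le 1$, and invoking the moment bound $\E[L_t^k]\le C(\t,\a,k)\log^k n$ of Lemma~\ref{lem:hamm-moments} yields the claimed product $C(\t,\a,k)\log^k n\cdot C_2(\a,d-k)(n\log n)^{(d-k)/2}$, with the factor $2^k\inftynorm{a}$ absorbed into $C_2$. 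On $\mathcal{G}^c$ the crude bound $\abss{W}\le 2^k\inftynorm{a}n^d$ multiplied by $\Pr[\mathcal{G}^c]\le 2n^{-d}$ contributes only an $O(1)$ additive term, negligible against the main bound (which is $\Omega(\log^d n)$ in all cases).

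The main obstacle is cleanly separating the two scales, since $L_t^k$ depends on both chains whereas the concentration controls $X_t$ alone, and one must avoid paying for a joint dependence. The device that resolves this is that $\mathcal{G}$ is an event on $X_t$ only and bounds every inner polynomial uniformly, so on $\mathcal{G}$ the factor $\max_{i_1,\ldots,i_k}\abss{g_{i_1,\ldots,i_k}(X_t)}$ pulls out of the $L_t^k$-term sum as a constant, after which $\ind_{\mathcal{G}}\le 1$ decouples the surviving $\E[L_t^k]$ from the event entirely. A secondary subtlety is that $g_{i_1,\ldots,i_k}$ is uncentered, so Theorem~\ref{thm:marginals-bound} is genuinely required in tandem with the concentration inequality in order to bound $\abss{g_{i_1,\ldots,i_k}(X_t)}$ rather than merely its deviation from the mean.
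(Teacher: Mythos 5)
Your proposal is correct and follows essentially the same route as the paper's proof: bound the inner degree-$(d-k)$ polynomials uniformly over all index tuples via the concentration inequality of Theorem~\ref{thm:multilinear-concentration} together with the mean bound of Theorem~\ref{thm:marginals-bound} and the mixing transfer of Lemma~\ref{lem:close-to-mixing}, union bound to get a high-probability good event, pull the resulting constant out to leave a $d_H(X_t,Y_t)^k$ factor controlled by Lemma~\ref{lem:hamm-moments}, and absorb the bad event with a crude $\inftynorm{a}n^d$ bound times its small probability. The only cosmetic difference is that you decouple via $\ind_{\mathcal{G}}\le 1$ where the paper divides a conditional expectation by $\Pr[\mathcal{G}]\ge 1/2$; these are interchangeable.
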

\begin{proof}
	We will employ the bound we have on the moments on the Hamming distance~\ref{lem:hamm-moments} together with concentration of measure for polynomial functions of the Ising model~\ref{thm:multilinear-concentration} to show the statement. We have from Theorems~\ref{thm:multilinear-concentration} and \ref{thm:marginals-bound} and Lemma~\ref{lem:close-to-mixing}, that for every choice of $i_1,i_2,\ldots i_k \in V$, and for $t > \frac{n(d+1)}{1-\a}\log n$,
	\begin{align}
	&\Pr\left[ \abss{\sum_{i_{k+1},\ldots,i_d} a_{i_{1}\ldots i_d} \prod_{l=k+1}^{d} X_{i_l,t}} > c_4(\a,d-k)\left(\frac{\inftynorm{a}n\log n}{1 - \a}\right)^{(d-k)/2}\right] \le  \frac{1}{n^{d+2}\inftynorm{a}}  \label{eq:msh0} \\
	&\implies \Pr\left[\exists \{ i_1,i_2,\ldots i_k \in V \} \;  \abss{\sum_{i_{k+1},\ldots,i_d} a_{i_{1}\ldots i_d} \prod_{l=k+1}^{d} X_{i_l,t}} > c_4(\a,d-k)\left(\frac{n\log n}{1 - \a}\right)^{(d-k)/2} \right] \label{eq:msh1}\\
	&\le  \frac{1}{n^{d-k+2}\inftynorm{a}} \label{eq:msh2},
	\end{align}
	where we used the fact that when $t$ is large, the distribution of $X_t$ is close to $p$ (Lemma~\ref{lem:close-to-mixing}) and (\ref{eq:msh1}) follows from a union bound.
	Define the set of states $G_{i_1 i_2 \ldots i_k}$ as follows.
	\begin{align}
	G_{i_1 i_2 \ldots i_k} = \left\{ x \in \Omega \; \middle|  \:  \abss{\sum_{i_{k+1},\ldots,i_d} a_{i_{1}\ldots i_d} \prod_{l=k+1}^{d} x_{i_l}} \le c_4(\a,d-k)\left(\frac{n\log n}{1 - \a}\right)^{(d-k)/2} \: \forall \:  \{ i_1,i_2,\ldots i_k \in V \} \right\} \label{eqdef:good-set-multi}
	\end{align}
	Then we have,
	\begin{align}
	&\abss{\E\left[ \sum_{i_1} (X_{i_1,t} - Y_{i_1,t})\left(\ldots \left(\sum_{i_k}(X_{i_k,t} - Y_{i_k,t}) \sum_{i_{k+1},\ldots,i_d} a_{i_{1}\ldots i_d} \prod_{l=k+1}^{d} X_{i_l,t}\right) \right)\right]} \label{eq:msh3}\\
	&\le \E\left[ \sum_{i_1} \abss{X_{i_1,t} - Y_{i_1,t}}\left(\ldots \left(\sum_{i_k}\abss{X_{i_k,t} - Y_{i_k,t}} \abss{\sum_{i_{k+1},\ldots,i_d} a_{i_{1}\ldots i_d} \prod_{l=k+1}^{d} X_{i_l,t}}\right) \right)\right] \label{eq:msh4}\\
	&\le c_4(\a,d-k)\left(\frac{n\log n}{1 - \a}\right)^{(d-k)/2} \E\left[\left(\sum_i \abss{X_{i,t} - Y_{i,t}} \right)^k \: \middle| \: X_t \in G_{i_1 \ldots i_k} \: \forall \: \{i_1,\ldots,i_k \in V \} \right] + \frac{1}{n^2} \label{eq:msh5}\\
	&\le c_4(\a,d-k)\left(\frac{n\log n}{1 - \a}\right)^{(d-k)/2}2^k\E\left[\dh(X_t, Y_t)^k \: \middle| \: X_t \in G_{i_1 \ldots i_k} \: \forall \: \{i_1,\ldots,i_k \in V \}\right] + \frac{1}{n^2} \label{eq:msh6}\\
	&\le c_4(\a,d-k)\left(\frac{n\log n}{1 - \a}\right)^{(d-k)/2}2^k. 2C(\t,\a,k)\log^k n \le C(\t,\a,k)\log^k nC_2(\a,d-k)(n\log n)^{(d-k)/2}. \label{eq:msh7}
	\end{align}
	where we have used the fact that $\sum_i \abss{X_i - Y_i} = 2\dh(X,Y)$ for $X,Y \in \Omega$, and (\ref{eq:msh7}) follows from Lemma~\ref{lem:hamm-moments} and the observation that $\Pr\left[ X_t \in G_{i_1 \ldots i_k} \: \forall \: \{i_1,\ldots,i_k \in V \} \right] \ge 1 - \frac{1}{n^{d-k+2}\inftynorm{a}} \ge 1/2$.
\end{proof}

\begin{lemma}
	\label{lem:multilinear-systematic-helper2}
	Consider a degree $d$ polynomial $f_a(x) = \sum_{i_1  i_2,\ldots, i_d} a_{i_1 i_2 \ldots i_d} x_{i_1}x_{i_2}\ldots x_{i_d}$. Let $p$ denote a high temperature Ising model on a graph $G=(V,E)$ with $|V|=n$ nodes with Dobrushin parameter $\alpha < 1$. Let $G_p = X_0, X_1, \ldots, X_t,\ldots$ denote the synchronous Gibbs sampler on $p$ and $H_p^{\mathrm{T}} = Y_0,Y_1,\ldots,Y_t,\ldots$ denote the HOGWILD! Gibbs sampler associated with $p$ such that $X_0 = Y_0$. Then we have, for $t > \frac{n(d+1)}{1-\a}\log n$, under the greedy coupling of the two chains, for all $0 \le k \le d$,
	\begin{align*}
	&\abss{\E\left[ \sum_{i_1} (X_{i_1,t} - Y_{i_1,t})\left( \ldots \left(\sum_{i_k}(X_{i_k,t} - Y_{i_k,t}) \sum_{i_{k+1},\ldots,i_d} a_{i_1\ldots i_d} \left(\prod_{l=k+1}^{d} X_{i_l,t} - \prod_{l=k+1}^{d} Y_{i_l,t}\right)\right) \right)\right]} \\
	&\le C(\t,\a,k)\log^k nC_2(\a,d-k)(n\log n)^{(d-k)/2}.
	\end{align*}
\end{lemma}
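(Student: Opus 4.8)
The plan is to reduce the statement directly to Lemma~\ref{lem:multilinear-systematic-helper1} by eliminating the $Y$-factors inside the innermost difference. The key algebraic observation is that, since every coordinate satisfies $Y_{i_l,t} = X_{i_l,t} - (X_{i_l,t}-Y_{i_l,t})$, the product over the tail indices expands multilinearly:
\begin{align*}
\prod_{l=k+1}^{d} Y_{i_l,t} = \sum_{T \subseteq \{k+1,\ldots,d\}} (-1)^{|T|}\prod_{l \in T}(X_{i_l,t}-Y_{i_l,t})\prod_{l \notin T} X_{i_l,t},
\end{align*}
so that, after the $T=\emptyset$ term cancels,
\begin{align*}
\prod_{l=k+1}^{d} X_{i_l,t} - \prod_{l=k+1}^{d} Y_{i_l,t} = \sum_{\emptyset \ne T \subseteq \{k+1,\ldots,d\}} (-1)^{|T|+1}\prod_{l \in T}(X_{i_l,t}-Y_{i_l,t})\prod_{l \notin T} X_{i_l,t}.
\end{align*}

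Substituting this into the quantity to be bounded and using linearity of expectation, I would write the left-hand side as a signed sum, over nonempty $T \subseteq \{k+1,\ldots,d\}$, of terms of the form
\begin{align*}
\E\left[\sum_{\{i_j\}_{j \le k}}\sum_{\{i_l\}_{l \in T}} \prod_{j=1}^{k}(X_{i_j,t}-Y_{i_j,t}) \prod_{l \in T}(X_{i_l,t}-Y_{i_l,t}) \sum_{\{i_m\}_{m \notin T}} a_{i_1\ldots i_d} \prod_{m \notin T} X_{i_m,t}\right],
\end{align*}
where $m \notin T$ ranges over $\{k+1,\ldots,d\}\setminus T$. Each such term has exactly $k+|T|$ difference factors $(X-Y)$ and an inner polynomial of degree $d-k-|T|$ in the $X$-variables alone. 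Because the coefficient array $a$ is symmetric under permutations of its indices, each term is, after relabeling the difference-indices to occupy the first $k+|T|$ slots, precisely an instance of Lemma~\ref{lem:multilinear-systematic-helper1} with $k$ replaced by $k+|T|$ (and with the same $\inftynorm{a}$ and the same lower bound on $t$). Applying that Lemma bounds the $T$-term by $C(\t,\a,k+|T|)\log^{k+|T|}n\, C_2(\a,d-k-|T|)(n\log n)^{(d-k-|T|)/2}$.

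It then remains to check the bookkeeping: each extra difference factor trades a factor of $\sqrt{n}$ for a factor of $\log n$. Indeed, the ratio of the $|T|=j$ bound to the target bound $C(\t,\a,k)\log^k n\, C_2(\a,d-k)(n\log n)^{(d-k)/2}$ is of order $(\log n / n)^{j/2} \le 1$, so the $j=1$ subsets dominate and every term is at most the target up to constants. Since there are only $\sum_{j\ge1}\binom{d-k}{j}$ terms, with constants depending only on $\t,\a,d$, I would absorb them into a redefined $C_2(\a,d-k)$ to obtain the claimed bound; the edge case $k=d$ is immediate, since the inner difference is then $1-1=0$.

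The main thing to get right is this last domination step: one must confirm that attaching an additional $(X-Y)$ factor (which contributes an extra power of the Hamming distance, hence a $\log n$ after taking expectations via Lemma~\ref{lem:hamm-moments}) genuinely costs a full $\sqrt{n}$ relative to the concentration scale $(n\log n)^{1/2}$ per degree supplied by Theorem~\ref{thm:multilinear-concentration}. This is exactly the trade-off already present in Lemma~\ref{lem:multilinear-systematic-helper1}, which is why the reduction works; the only genuinely new ingredient is the multilinear expansion that removes the $Y$-factors. A direct approach mirroring the proof of Lemma~\ref{lem:multilinear-systematic-helper1} --- defining a good set on which the inner difference is small --- would be messier, since the inner quantity is a difference of polynomials evaluated on two distinct (coupled) configurations, and the HOGWILD! chain $Y_t$ need not be close in distribution to $p$; the expansion sidesteps this by reducing everything to pure functions of $X_t$.
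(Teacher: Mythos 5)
Your proof is correct, and it takes a visibly different route from the paper's. The paper proves the lemma by backwards induction on $k$: it rewrites the inner difference $\prod_{l>k} X_{i_l,t}-\prod_{l>k} Y_{i_l,t}$ as $2\prod_{l>k} X_{i_l,t}$ restricted to index tuples where the two products disagree, attributes each such disagreement to a single disagreeing coordinate, and thereby bounds the level-$k$ quantity by $d-k$ copies of the level-$(k+1)$ quantity, with the base case $k=d$ supplied by Lemma~\ref{lem:hamm-moments}. You instead perform the one-shot multilinear expansion $\prod Y = \prod\bigl(X-(X-Y)\bigr)$ and reduce directly to Lemma~\ref{lem:multilinear-systematic-helper1} at levels $k+|T|$. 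Both arguments rest on exactly the same trade-off --- each additional $(X-Y)$ factor costs a $\log n$ via Lemma~\ref{lem:hamm-moments}, while each retained polynomial degree costs $(n\log n)^{1/2}$ via Theorem~\ref{thm:multilinear-concentration} --- and your domination step $(\log n/n)^{j/2}\le 1$ is the same bookkeeping the paper's induction implicitly performs. What your version buys is rigor at the decomposition step: the paper's claim that a product disagreement arises from ``a disagreement at one index and an agreement of the product over the rest'' does not, taken literally, produce the displayed terms (if the product over the remaining indices agrees, the difference of those remaining products that the paper writes down is identically zero), and the written induction never visibly invokes Lemma~\ref{lem:multilinear-systematic-helper1}, even though the paper announces it as a needed ingredient and the stated bound's factor $(n\log n)^{(d-k)/2}$ can only come from the concentration estimate inside that lemma. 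Your expansion makes the appeal to Lemma~\ref{lem:multilinear-systematic-helper1} explicit and correct; the only hypothesis you should flag is the permutation-symmetry of the coefficient array $a$, which the paper's notational convention guarantees and which you need in order to relabel the indices in $T$ into the leading slots before invoking that lemma.
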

\begin{proof}
	We prove this by inducting backwards on $k$. When $k=d$, we get the desired statement from Lemma~\ref{lem:hamm-moments}. Assume the statement holds for some $k+1 < d$. We will show it holds for $k$ as well.
	In the following calculations, we will, for a while, drop the subscript $t$ from $X_t$ and $Y_t$ for brevity. We have,
	\begin{align}
	&\abss{\E\left[ \sum_{i_1} (X_{i_1} - Y_{i_1})\left( \ldots \left(\sum_{i_k}(X_{i_k} - Y_{i_k}) \sum_{i_{k+1},\ldots,i_d} a_{i_1\ldots i_d} \left(\prod_{l=k+1}^{d} X_{i_l} - \prod_{l=k+1}^{d} Y_{i_l}\right)\right) \right)\right]} \\
	&= \abss{\E\left[ \sum_{i_1} (X_{i_1} - Y_{i_1})\left( \ldots \left(\sum_{i_k}(X_{i_k} - Y_{i_k}) \sum_{\substack{i_{k+1}\ldots i_d \\ X_{i_{k+1}}\ldots X_{i_d} \ne Y_{i_{k+1}\ldots Y_{i_d}}}} a_{i_1\ldots i_d} \left(2\prod_{l=k+1}^{d} X_{i_l} \right)\right) \right)\right]} \label{eq:msp2}.
	\end{align}
	The last summation of (\ref{eq:msp2}) is over indices $i_{k+1}, \ldots, i_d$ such that the product of values in $X$ at these indices disagrees with the corresponding product of values in $Y$. This can happen due to a disagreement between $X$ and $Y$ at one of the indices $i_{k+1} ,\ldots, i_d$, say $j$, and an agreement of the product over the rest of the indices. That is, $X_j \ne Y_j$ and $\prod_{l=k+1}^{j-1} X_{i_j} \prod_{l=j+1}^{d} X_{i_j} = \prod_{l=k+1}^{j-1} Y_{i_j} \prod_{l=j+1}^{d} Y_{i_j}$. This leads to the last summation in (\ref{eq:msp2}) being bounded above by the sum of $d-k$ terms of the form 
	\begin{align}
	\sum_{i_j} (X_{i_j} - Y_{i_j})\sum_{i_{k+1},\ldots, i_{j-1},i_{j+1},\ldots,i_{d}} a_{i_1\ldots i_d} \left(\prod_{l=k+1}^{j-1}X_{i_l} \prod_{l=j+1}^{d}X_{i_l} -  \prod_{l=k+1}^{j-1}Y_{i_l} \prod_{l=j+1}^{d}Y_{i_l}\right).
	\end{align}
	Replacing the last summation of (\ref{eq:msp2}) with $d-k$ terms of the above form we see that the whole quantity decomposes into $d-k$ terms such that the inductive hypothesis can be applied over each of the terms. It is a fairly simple calculation to see that then we get the desired bound of the Theorem by induction (by keeping in mind that the quantity of focus here is the dependence on $n$ and we treat $d$ as a constant).
\end{proof}

Given Lemma~\ref{lem:multilinear-systematic-helper2}, Theorem~\ref{thm:multilinear-systematic} follows.
\begin{prevproof}{Theorem}{thm:multilinear-systematic}
	The Theorem follows directly from Lemma~\ref{lem:multilinear-systematic-helper2} applied to the case $k=0$.
\end{prevproof}

Next, we show that we can accurately estimate the expectations above by showing that the variance of the functions under the asynchronous model is comparable to that of the functions under the sequential model.
\begin{theorem}[Variance of degree $d$ polynomials computed using HOGWILD!-Gibbs]
	\label{thm:variance_d_linear}
	Consider a high temperature Ising model $p$ on $n$ nodes with Dobrushin parameter $\a < 1$. Let $f_a(x)$ be a degree $d$ polynomial function
	Let $Y_0,Y_1,\ldots,Y_t$ denote a run of HOGWILD! Gibbs sampling associated with $p$. We have, for $t > \frac{(d+1)n}{1 - \a}\log\left(n^2\right)$,
	$$\Var\left[ f(Y_t) \right] \le \inftynorm{a}^2C(d,\a,\tau) n^{d}.$$
\end{theorem}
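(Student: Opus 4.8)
The plan is to compare the HOGWILD! sample $Y_t$ with the coupled sequential sample $X_t$ (started at $X_0 = Y_0$ and run under the greedy coupling), and to bound the variance of $f_a(Y_t)$ in terms of the variance of $f_a(X_t)$ plus the second moment of the coupling discrepancy $U := f_a(Y_t) - f_a(X_t)$. The elementary inequality $\Var[A+B] \le 2\Var[A] + 2\Var[B]$ together with $\Var[U]\le \E[U^2]$, applied with $A = f_a(X_t)$ and $B = U$, gives
\begin{align*}
\Var[f_a(Y_t)] \le 2\Var[f_a(X_t)] + 2\,\E[U^2].
\end{align*}
It then suffices to show that the first term is $O(\inftynorm{a}^2 n^d)$ and that the second is of strictly lower order in $n$.

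For the first term I would transfer the stationary variance bound to the time-$t$ sequential sample. Since $t > \frac{(d+1)n}{1-\a}\log(n^2)$ is well past the mixing time, Lemma~\ref{lem:close-to-mixing} applied to the degree-$d$ polynomial $f_a$ and to the degree-$2d$ polynomial $f_a^2$ (both bounded in absolute value by a fixed power of $n$ times $\inftynorm{a}$, respectively $\inftynorm{a}^2$) shows that $\abss{\E[f_a(X_t)] - \E[f_a(X)]}$ and $\abss{\E[f_a(X_t)^2] - \E[f_a(X)^2]}$ are each exponentially small in $t/n$, hence $o(1)$ for this choice of $t$. Consequently $\Var[f_a(X_t)] \le \Var[f_a(X)] + o(1)$, and the corollary to Theorem~\ref{thm:multilinear-concentration} bounds $\Var[f_a(X)] \le \inftynorm{a}^2 C_3(d,\a)\, n^d$.

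The crux is the second term $\E[U^2]$. The naive Lipschitz estimate $\abss{f_a(X_t) - f_a(Y_t)} \le 2d\inftynorm{a}\, n^{d-1}\dh(X_t,Y_t)$ is far too lossy, producing $n^{2d-2}$ up to polylogs. Instead I would reuse the telescoping-plus-concentration machinery behind Lemmas~\ref{lem:multilinear-systematic-helper1} and~\ref{lem:multilinear-systematic-helper2}: expand $f_a(X_t)-f_a(Y_t)$ by flipping the disagreement coordinates one at a time, so each increment equals $(X_{i}-Y_{i})$ times a degree-$(d-1)$ partial polynomial of the current configuration. Squaring produces cross terms indexed by pairs $(i,i')$ of disagreement coordinates, each carrying a product of two partial polynomials. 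I would condition on the good event that every such partial polynomial is at most $c(\a,d)(n\log n)^{(d-1)/2}$ in absolute value; by Theorem~\ref{thm:multilinear-concentration} transferred to $X_t$ via Lemma~\ref{lem:close-to-mixing}, the complement of this event has polynomially small probability, and, choosing the constant in the radius large enough, it beats the deterministic bound $U^2 \le \inftynorm{a}^2 n^{2d}$ so its contribution is negligible. On the good event the sum over disagreement pairs is controlled by the second moment of the number of disagreements, and Lemma~\ref{lem:hamm-moments} gives $\E[\dh(X_t,Y_t)^2 \mid \text{good}] = O(\log^2 n)$; the lower-order correction terms (where a disagreement index also appears inside a partial polynomial) are handled exactly as in the proof of Theorem~\ref{thm:bilinear-systematic}. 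This yields $\E[U^2] \le \inftynorm{a}^2 C'(d,\a,\tau)\,(n\log n)^{d-1}(\log n)^{O(1)} = o(\inftynorm{a}^2 n^d)$.

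Combining the three estimates gives $\Var[f_a(Y_t)] \le \inftynorm{a}^2 C(d,\a,\tau)\, n^d$, as claimed. I expect the main obstacle to be the clean execution of the second-moment bound: unlike the first-moment bias computation of Theorem~\ref{thm:multilinear-systematic}, the square forces control of products of partial polynomials at \emph{distinct} disagreement indices and simultaneous conditioning on polynomially many concentration good-events, so the bookkeeping of which disagreement indices land inside versus outside each partial polynomial, and the matching of the powers of $n$ produced by concentration against the Hamming moments of Lemma~\ref{lem:hamm-moments}, is the delicate part.
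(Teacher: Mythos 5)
Your argument is correct in outline, but it takes a genuinely different and more labor-intensive route than the paper. The paper never decomposes $f_a(Y_t)$ as $f_a(X_t)+U$: it simply writes $\Var[f_a(Y_t)] \le \E[f_a(Y_t)^2] + \E[f_a(Y_t)]^2$ and then observes that $f_a^2$ is itself a degree-$2d$ polynomial, so the already-proven bias bound (Theorem~\ref{thm:multilinear-systematic} applied at degree $2d$) gives $\abss{\E[f_a(X_t)^2 - f_a(Y_t)^2]} \le C(n\log n)^{(2d-1)/2}$ in one stroke, while Theorem~\ref{thm:marginals-bound} controls $\E[f_a(X_t)^2]$ and $\E[f_a(X_t)]^2$ by $C(n\log n)^{d}$. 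This completely sidesteps the second moment $\E[U^2]$ of the coupling discrepancy, which is precisely the step you identify as delicate: carrying it out requires a new lemma not present in the paper, with the added subtlety that the degree-$(d-1)$ partial polynomials produced by your telescoping are evaluated at mixed $X$/$Y$ configurations, so the concentration of Theorem~\ref{thm:multilinear-concentration} (stated for $X\sim p$) must be transferred to them by a further induction in the style of Lemma~\ref{lem:multilinear-systematic-helper2}. Your sketch of that step is plausible and the needed ingredients (Lemma~\ref{lem:hamm-moments}, a union bound over polynomially many good events, and the crude fallback $U^2\le 4\inftynorm{a}^2 n^{2d}$ against a bad event of probability $n^{-K}$) are all available, so I see unexecuted bookkeeping rather than a fatal gap. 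One genuine advantage of your route: by invoking the variance corollary of Theorem~\ref{thm:multilinear-concentration} for the dominant term you obtain $C\inftynorm{a}^2 n^{d}$ with no logarithmic factors, exactly as the theorem is stated, whereas the paper's own chain of inequalities actually delivers $(n\log n)^{d}$.
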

\begin{proof}
	\begin{align}
	&\Var\left[  f_a(Y_t)  \right] = \E\left[f_a(Y_t)^2\right] - \E\left[f_a(Y_t)\right]^2 \\
	&\le \E\left[f_a(Y_t)^2\right] + \E\left[f_a(Y_t)\right]^2 \label{eq:var2}
	\end{align}
	First, we proceed to bound $\E\left[f_a(Y_t)^2\right]$. Consider a coupled execution of synchronous Gibbs sampling $X_0, X_1,\ldots,X_t$ where $X_0 = Y_0$ coupled using the greedy coupling. We have,
	\begin{align}
	\E\left[ f_a(Y_t)^2 \right] &\le \E\left[ f_a(X_t)^2\right] + \abss{\E\left[ f_a(X_t)^2 - f_a(Y_t)^2  \right]} \\
	&\le C_1(2d,\a) \left(n\log n\right)^{d} + C_2(2d,\a, \tau) \left(n\log n\right)^{(2d-1)/2}  \le C_3(d,\a,\tau) \left(n\log n \right)^d \label{eq:var4}
	\end{align}
	where we used the fact that $f(x)^2$ is a degree $2d$ polynomial and then applied Theorem~\ref{thm:multilinear-systematic} for degree $2d$ polynomials, in addition to Theorem~\ref{thm:marginals-bound}.
	Now we look at the second term of (\ref{eq:var2}).
	\begin{align}
	\E\left[f_a(Y_t)\right]^2 &\le \E\left[ f_a(X_t)\right]^2 + \abss{\E\left[ f_a(Y_t)\right]^2 - \E\left[ f_a(X_t)\right]^2} \\
	& \le \E\left[ f_a(X_t)\right]^2 + \abss{\left(\E\left[ f_a(Y_t)\right] - \E\left[ f_a(X_t)\right] \right)\left( \E\left[ f_a(Y_t)\right] + \E\left[ f_a(X_t)\right]\right)} \\
	& \le C_1^2(d,\a) \left(n\log n\right)^d + C_2(d,\a,\tau)\left(n \log n\right)^{(d-1)/2} \left(\abss{2\E\left[ f_a(X_t)\right]} + \abss{\E\left[ f_a(Y_t)\right] - \E\left[ f_a(X_t)\right] } \right) \\
	&\le C_4(d,\a,\tau)\left(n \log n\right)^{(2d-1)/2}, \label{eq:var8}
	\end{align}
	where again we employ Theorem~\ref{thm:marginals-bound} and Theorem~\ref{thm:multilinear-systematic} for degree $d$ polynomials.
	Combining (\ref{eq:var4}) and (\ref{eq:var8}), we get the desired bound.
\end{proof}

\subsection{Going Beyond Ising Models}
We presented results for accurate estimation of polynomial functions over the Ising model. However, the results can be extended to hold for more general graphical models satisfying Dobrushin's condition. A main ingredient here was concentration of measure. If the class of functions we look at has $d^{th}$-order bounded differences in expectation, then we indeed get concentration of measure for these functions (Theorem 1.2 of ~\cite{GotzeSS18}). This combined with the techniques in our paper would allow similar gains in accurate estimation of such functions on general graphical models.

	\section{Experiments}
\label{sec:experiments}

We show the results of experiments run on a machine with four 10-core Intel Xeon E7-4850 CPUs to demonstrate the practical validity of our theory.
In our experiments, we focused on two Ising models---Curie-Weiss and the Grid.
The Curie-Weiss $CW(n, \alpha)$ is the Ising model corresponding to the complete graph on $n$ vertices
with edges of weight $\beta = \frac{\alpha}{n-1}$.
The $Grid(k^2, \alpha)$ model is the Ising model corresponding to the $k$-by-$k$ grid with the left connected to the right and top connected to the bottom to form a  torus---a four-regular graph;
the edge weights are $\frac{\alpha}{4}$.
The total influence of each of these models is at most $\alpha$, so we chose $\alpha = 0.5$
to ensure Dobrushin's condition.
To generate samples, we start at a uniformly random configuration and run Markov chains for $T = 10 n \log_2(n)$ steps to ensure mixing.

In our first experiment (Figure \ref{fig:delays}) we validate the modeling assumption that the average delay of a read $\tau$ is a constant.
Computing the exact delays in a real run of the HOGWILD! is not possible, but we approximate the delays by making processes log read and write operations 
to a lock-free queue as they execute the HOGWILD!-updates.
We present two plots of the average delay of a read in a HOGWILD! run of the $CW(n, 0.5)$ Markov chain with respect to $n$.
Four asynchronous processors were used to generate the first plot, while twenty were used for the second.
We notice that the average delay depends on the number of asynchronous processes, but is constant with respect to $n$ as assumed in our model.

\begin{figure}[h!]
	\centering
	\begin{subfigure}[b]{0.4\linewidth}
		\includegraphics[width=\linewidth]{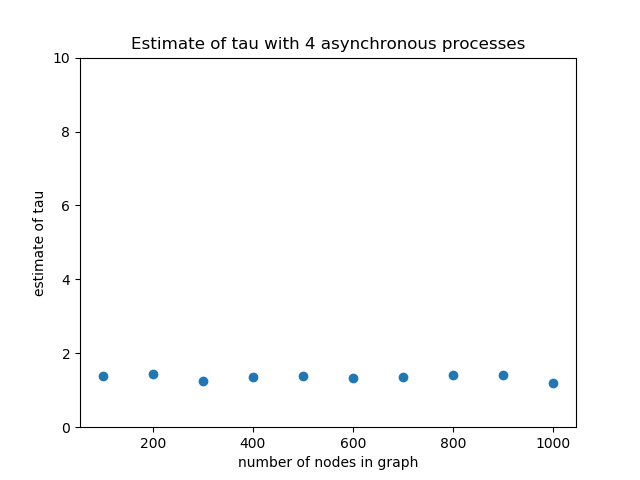}
	\end{subfigure}
	\begin{subfigure}[b]{0.4\linewidth}
		\includegraphics[width=\linewidth]{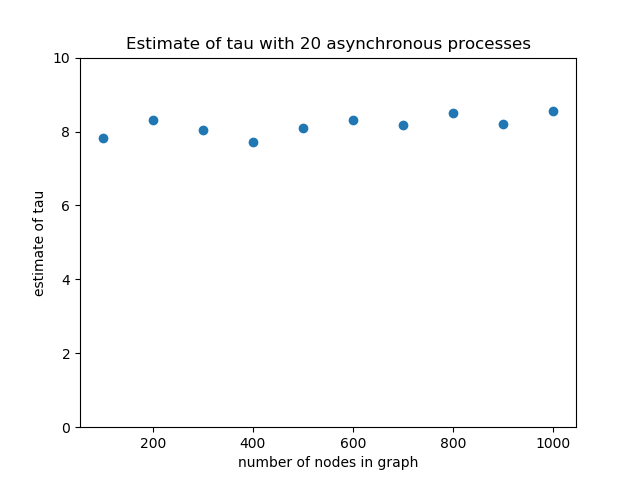}
	\end{subfigure}
	\caption{Average delay of reads for $CW(n, 0.5)$ model. Four asynchronous processors were used on the left, while twenty were used on the right.}
	\label{fig:delays}
\end{figure}

Next, we plot (in Figure~\ref{fig:avgdelays}) the relationship between the number of asynchronous processors used in a HOGWILD! execution and the delay parameter $\tau$.
For this plot, we estimated $\tau$ by the average empirical delay over HOGWILD! runs of $CW(n, 0.5)$ models, with $n$ ranging from 100 to 1000 in increments of one hundred.
The plot shows a linear relationship, and suggests that the delay per additional processor is approximately $0.4$ steps.

\begin{figure}[h!]
	\centering
	\begin{subfigure}[b]{0.8\linewidth}
		\includegraphics[width=\linewidth]{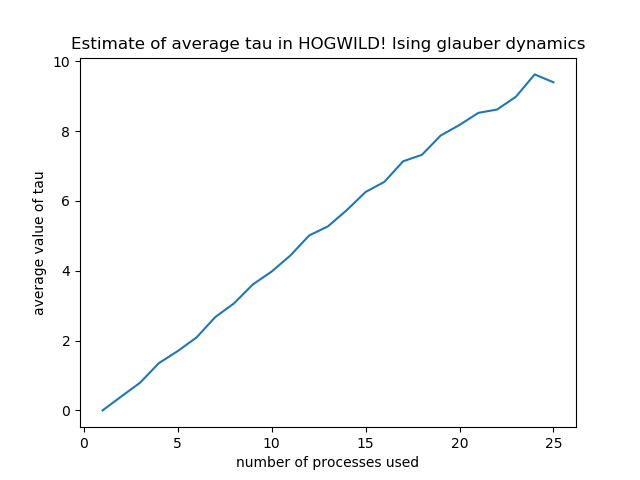}
	\end{subfigure}
	\caption{Average delay of reads for $CW(n, 0.5)$ model as the number of processors used varies.}
	\label{fig:avgdelays}
\end{figure}

The primary purpose of our work is to demonstrate that polynomial statistics computed from samples of a HOGWILD! run of Gibbs Sampling will approximate those computed from a sequential run.
Our third experiment demonstrates exactly this fact.
We plot (in Figure \ref{fig:expectation-errorbars} on the left) the empirical expectations of the {\em complete bilinear function} $f(X_1,\ldots,X_n) = \sum_{i \ne j} X_i X_j$ as we vary the number of nodes $n$ in a Curie-Weiss model graph.
Each red point is the empirical mean of the function $f$ computed over 5000 samples from the HOGWILD! Markov chain corresponding to $CW(n, 0.5)$, and
each blue point is the empirical mean produced from 5000 sequential runs of the same chain.
Our theory (Theorem \ref{thm:bilinear-systematic}) predicts that the bias, the vertical difference in height between red and blue points, 
at any given value of $n$ will be on the order of the standard deviation divided by $\sqrt{n}$ (standard deviation is $\Theta(n)$ and bias is $O(\sqrt{n})$).
We plot error bars of this order, and find that the HOGWILD! means fall inside the error bars, thus corroborating our theory.
We show that theory and practice coincide even for sparse graphs, by making the same plot
for the $Grid(n, 0.5)$ model on the right of the same figure.

\begin{figure}[h!]
	\centering
	\begin{subfigure}[b]{0.4\linewidth}
		\includegraphics[width=\linewidth]{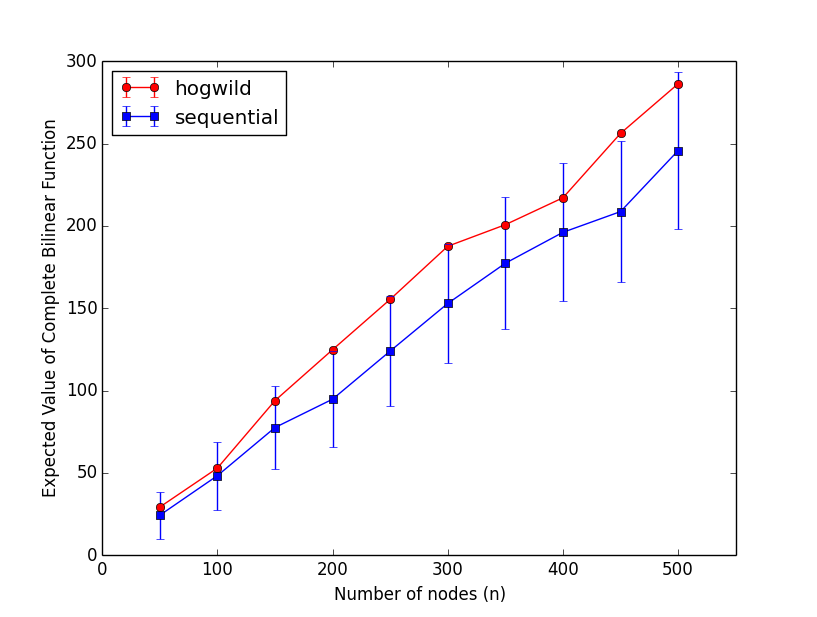}
	\end{subfigure}
	\begin{subfigure}[b]{0.4\linewidth}
		\includegraphics[width=\linewidth]{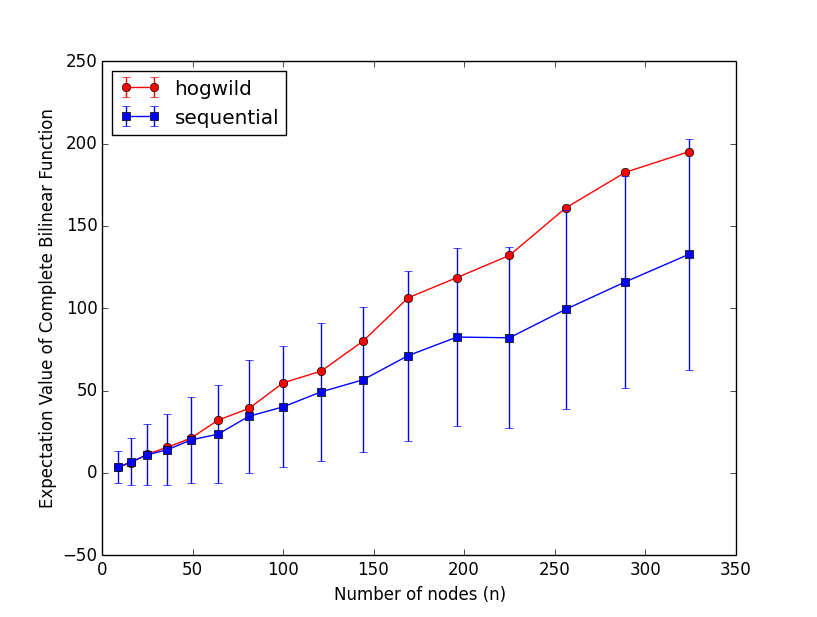}
	\end{subfigure}
	\caption{Means (with appropriately scaled error bars) of the complete bilinear function computed over 5000 sequential and hogwild runs of $CW(n, 0.5)$ (left) and $Grid(n, 0.5)$ (right).}
	\label{fig:expectation-errorbars}
\end{figure}

	\section{Acknowledgements}
\label{sec:acknowledgements}

We thank Prof. Srinivas Devdas and Xiangyao Yu for helping us gain access to and program on their multicore machines.
	
	\clearpage
	\bibliographystyle{alpha}
	\bibliography{biblio}

\end{document}